\theoremstyle{definition}
\newcommand\numberthis{\addtocounter{equation}{1}\tag{\theequation}}
\newtheorem*{theorem*}{Lemma}
\title{ Multi-Label Learning with Provable Guarantee}
\author{Sayantan Dasgupta \\ \url{sayantad@uci.edu}}
\begin{document}

\maketitle

\begin{abstract}
Here we study the problem of learning labels for large text corpora where each text can be assigned a variable number of labels. The problem might seem trivial when the label dimensionality is small, and can be easily solved using a series of one-vs-all classifiers. However, as the label dimensionality increases to several thousand, the parameter space becomes extremely large, and it is no longer possible to use the one-vs-all technique. Here we propose a model based on the factorization of higher order moments of the words in the corpora, as well as the cross moment between the labels and the words for multi-label prediction. Our model provides guaranteed convergence bounds on the estimated parameters. Further, our model takes only three passes through the training dataset to extract the parameters, resulting in a highly scalable algorithm that can train on GB's of data consisting of millions of documents with hundreds of thousands of labels using a nominal resource of a single processor with 16GB RAM. Our model achieves 10x-15x order of speed-up on large-scale datasets while producing competitive performance in comparison with existing benchmark algorithms.
\end{abstract}

\section{Introduction}
Multi-label learning for large text corpora is an upcoming problem in Large-Scale Machine Learning. Unlike the multi-class classification where a text is assigned only one label from a set of labels, here a text can have a variable number of labels. A basic approach to the problem is to use 1-vs-all classification technique by training a single binary classifier for every label. If the vocabulary size of a text corpus is $D$ and the label dimensionality is $L$, then these 1-vs-all models require $\mathcal{O}(DL)$ parameters. Most of the text corpora have moderate to high vocabulary size ($D$), and 1-vs-all models for label prediction is feasible as long as $L \ll D$. However, as the number of labels increases to a point when $L \sim D$, it is no longer possible to use the 1-vs-all classifier, since the number of parameters required increases to $\mathcal{O}(D^2)$, and the model can no longer be stored in the memory \cite{LEML}. 

Recently there has been attempts to reduce the complexity of such models, by using a low rank mapping $\Phi: \mathbb{R}^D \rightarrow \mathbb{R}^L$ in between the data and the labels. If the rank of such mappings is limited to $K \ll D$, then the model requires $\Theta \left( (L+D)K \right)$ parameters. Both WSABIE \cite{WSABIE} and LEML \cite{LEML} utilizes such mappings. WSABIE defines weighted approximate pair-wise rank (WARP) loss on such mappings and optimizes the loss on the training dataset. LEML uses similar mapping but generalizes the loss function to squared-loss, sigmoid loss or hinge loss, which are typical to the cases of Linear Regression, Logistic Regression, and Linear SVM respectively. 

Both of WSABIE and LEML uses low-rank discriminative models, where the low-rank mapping usually has the form $Z=HW^\top$, where $W \in \mathbb{R}^{D\times K}$ and $H \in \mathbb{R}^{L\times K} $. Here we propose a generative solution for the same problem using latent variable based probabilistic modeling. Unlike the usual cases where such latent variable models are trained using EM, we use Method of Moments \cite{MoM} to extract the parameters from the latent variable model. We show that our method can be globally convergent when the sample size is larger than a specific lower bound, and establish theoretical bounds for the estimated parameters. We also show the competitive performance of our method in terms of classification measures as well as computation time. 

\section{Latent Variable Model}

We use a generative model as shown in Figure \ref{fig:model}. The underlying generative process of the model is described as follows.

\begin{figure}[tb]
\label{fig:model}
\centering
\includegraphics[scale=0.4]{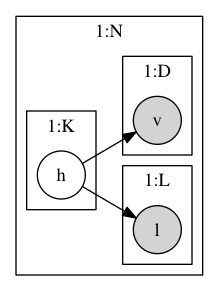} 
\caption{Plate Notation}
\label{fig:model}
\end{figure}

\subsection{Generative Model}

Let us assume that there are $N$ documents, the vocabulary size is $D$, and total number of labels are $L$. For any document $d \in \{ d_1,d_2 \dots d_N \} $ we first choose a latent state of $h \in \{1,2 \dots K\}$ from the discrete distribution $P\big[h \big]$, then we choose a word $v \in \{ v_1,v_2 \dots v_D \}$ from the discrete distribution $P\big[ v|h \big]$, and a label $l \in \{ l_1,l_2 \dots l_L \}$ from the discrete distribution $P\big[ l|h \big]$.  The generative process is as follows:

\begin{gather*} 
h \sim Discrete(P\big[ h \big]) \\
v \sim Discrete(P\big[ v|h \big]) \\
l \sim Discrete(P\big[ l|h \big]) \numberthis \label{1}
\end{gather*}

Let us denote the probability of the latent variable $h$ assuming the state $k \in {1 \dots K} $ as,

\begin{equation}
\pi_k = P\big[ h=k \big]
\end{equation}

Let us define $\mu_k \in \mathbb{R}^D  $ as the probability vector of all the words conditional to the latent state $k \in {1 \dots K} $, i.e. 
\begin{equation}
\mu_k=P\big[ v|h=k \big]
\end{equation}

and $\gamma_k \in \mathbb{R}^L  $ as the probability vector of all the labels conditional to the latent state $k \in {1 \dots K} $, i.e. 
\begin{equation}
\gamma_k=P\big[ l|h=k \big]
\end{equation}

Let the matrix $O \in \mathbb{R}^{D \times K}$ denote the conditional probabilities for the words, i.e.,\\  $O_{i,k}=P\big[ v_i|h=k \big]$.
Then $O=[\mu_1|\mu_2| \dots |\mu_K] $. 

Similarly, let $Q  \in \mathbb{R}^{L \times K}$ denote the conditional probabilities for the labels, i.e., \\ $Q_{j,k}=P\big[ l_j|h=k \big]$. Then, $Q = [\gamma_1|\gamma_2| \dots |\gamma_K] $.

We assume that the matrix $O$ and $Q$ are of full rank, and their columns are fully identifiable. The aim of our algorithm is to estimate the matrices $O$, $Q$ and the vector $\pi$.

\subsection{Moment Formulation}

Following the generative model in equation \ref{1},  we try to formulate the matrix of the joint probability mass function of the words. Let us assume that we choose two words $w_1$ and $w_2$ from a document at random. The probability $P[w_1=v_i]$ represents the probability by which any word picked at random from a document turns out to be $v_i$, and it is nothing but $P[v_i]$.

Similarly, $P[w_1=v_i,w_2=v_j]$ represents the joint probability by which any two words picked at random from a document turn out to be $v_i$ and $v_j$, and it is same as $P[v_i,v_j]$, with $i, j = 1,2 \dots D$.
Now, from the generative process in Equation \ref{1}, $w_1$ and $w_2$ are conditionally independent given $h$, i.e., $P[w_1,w_2|h=k] = P[w_1|h=k]P[w_2|h=k]$ with $k=1,2 \dots K$.

Therefore, the joint probability of two words $v_i$ and $v_j$ is,
\begin{align*}
&P[v_i,v_j]\\
&=P[w_1=v_i,w_2= v_j] \\
&=\sum_{k=1}^K{P[w_1=v_i,w_2= v_j|h=k]P[h=k]} \\
&=\sum_{k=1}^K{P[w_1=v_i|h=k]P[w_2=v_j|h=k]P[h=k]} \\
&=\sum_{k=1}^K{P[v_i|h=k]P[v_j|h=k]P[h=k]} \\
&=\sum_{k=1}^K{\pi_k \mu_{ki} \mu_{kj} }, \qquad \forall i,j \in \{ 1,2 \dots D\}
\end{align*}

Therefore, if we define $M_2 \in \mathbb{R}^{D \times D}$ as the matrix representing the joint probability mass function of the words, with $M_{2_{i,j}}=P\big[v_i,v_j\big]$, we can express it as,
\begin{equation}
M_2=\sum_{k=1}^K{\pi_k\mu_k {\mu_k}^\top } =\sum_{k=1}^K{\pi_k\mu_k \otimes \mu_k} 
\end{equation}

Similarly, if the tensor $M_3 \in \mathbb{R}^{D \times D \times D}$ is defined as the third order probability moment, with $M_{3_{i,j,\tau}} =P[v_i,v_j,v_{\tau}]$  $ \forall i,j,\tau \in \{ 1,2 \dots D\}$, then it can be represented as,

\begin{equation}
M_3=\sum_{k=1}^K{\pi_k\mu_k \otimes \mu_k \otimes \mu_k}
\end{equation}

Further, if we define the cross moment between the labels and the words as $M_{2L} \in \mathbb{R}^{L \times D \times D}$, with  $M_{{2L}_{\tau,i,j}} =P[l_{\tau},v_i,v_j]$, where  $ \tau \in \{1,2 \dots L \}$ and $i,j \in \{ 1,2 \dots D\}$, then

\begin{equation}
M_{2L}=\sum_{k=1}^K{\pi_k  \gamma_k \otimes \mu_k \otimes \mu_k}
\end{equation}

\subsection{Parameter Extraction}

In this section, we revisit the method to extract the matrices $O$ and $Q$ as well as the latent state probabilities $\pi$. The first step is to whiten the matrix $M_2$, where we try to find a matrix low rank $W$ such that $W^\top M_2 W=I$. This is a method similar to the whitening in ICA, with the covariance matrix being replaced with the co-occurrence probability matrix in our case. 

The whitening is usually done through eigenvalue decomposition of $M_2$. If the $K$ maximum eigenvalues of $M_2$ are $\{ \nu_k \}_{k=1}^K$, and the corresponding eigenvectors are $\{ \omega_k \}_{k=1}^K$, then the whitening matrix of rank $K$ is computed as $W=\Omega{\Sigma}^{-1/2}$, where $\Omega=\big[ \omega_1|\omega_2| \dots |\omega_K \big]$, and $\Sigma = diag(\nu_1,\nu_2,\dots,\nu_K)$. 

Upon whitening $M_2$ takes the form 
\begin{align*}
W^\top M_2W &= W^\top \big( \sum_{k=1}^K{\pi_k\mu_k  \mu_k^\top} \big)W = \sum_{k=1}^K{ \big( \sqrt{\pi_k}W^\top \mu_k \big)  \big( \sqrt{\pi_k}W^\top \mu_k \big)^\top  } = \sum_{k=1}^K \tilde{\mu}_k \tilde{\mu}_k^\top  = I 
\numberthis \label{eqn:whiten}
\end{align*}

Hence $ \tilde{\mu}_k = \sqrt{\pi_k}W^\top\mu_k \in \mathbb{R}^K$ are orthonormal vectors in the domain $\mathbb{R}^K$.
Multiplying $M_3$ along all three dimensions by $W$, we get
\begin{align*}
 \tilde{M_3} &= M_3(W,W,W) = \sum_{k=1}^K \pi_k(W^\top \mu_k) \otimes (W^\top \mu_k) \otimes (W^\top \mu_k)  = \sum_{k=1}^K \frac{1}{\sqrt{\pi_k}} \tilde{\mu}_k \otimes  \tilde{\mu}_k \otimes \tilde{\mu}_k
\numberthis
\end{align*}

Upon canonical decomposition of  $\tilde{M_3}$, if the eigenvalues and eigenvectors are  $\{ \lambda_k\}_{k=1}^K $ and $\{u_k\}_{k=1}^K$ respectively, then $\lambda_k = \sfrac{1}{\sqrt{\pi_k}} $. i.e., $\pi_k = \lambda_k^{-2}$, and,
\begin{equation}
u_k = \tilde{\mu}_k = \sqrt{\pi_k}W^\top\mu_k= \frac{1}{ \lambda_k}W^\top\mu_k
\end{equation}

The $\mu_k $s can be recovered as $\mu_k = \lambda_kW^\dagger u_k $, where $W^\dagger$ is the pseudo-inverse of $W^\top$, i.e., $W^\dagger =  W \left(W^\top W\right)^{-1} $. The matrix $O$ can be constructed as $O=\big[\mu_1|\mu_2|\dots|\mu_K \big]$. Since we normalize the columns of $O$ as $O_{vk}=\frac{O_{vk}}{\sum_v{O_{vk}}}$. it is sufficient to compute $\mu_k = W^\dagger u_k$, since $\lambda_k$ will be cancelled during normalization.

It is possible to compute the $\gamma_k$ for $k = 1 \dots K$ through the factorization of second and third order moments of the labels. However, it is not possible to match the topics between $\mu_{1:K}$ and $\gamma_{1:K}$. Therefore, we use the cross moment $M_{2L}$ between the words and the labels. If we multiply the tensor $M_{2L}$ twice by $W$, we get 

\begin{align*}
\tilde{M}_{2L} &=M_{2L}(W,W) \\
& =\sum_{k=1}^K{\pi_k  \gamma_k \otimes (W^\top\mu_k) \otimes (W^\top \mu_k)} \\
& =\sum_{k=1}^K{ \gamma_k \otimes (\sqrt{\pi_k}W^\top\mu_k) \otimes (\sqrt{\pi_k}W^\top \mu_k)} \\
& =\sum_{k=1}^K{\gamma_k \otimes \tilde{\mu}_k \otimes \tilde{\mu}_k}
\numberthis
\end{align*}

If the $k$th eigenvalue of $\tilde{M}_3$ is $u_k$, then

\begin{align*}
 &u_k ^\top M_{2L}(W,W)  u_k = \tilde{\mu}_k ^\top M_{2L}(W,W)  \tilde{\mu}_k = \tilde{\mu}_k^\top \left( \sum_{k=1}^K{\gamma_k \otimes \tilde{\mu}_k \otimes \tilde{\mu}_k} \right)  \tilde{\mu}_k = \gamma_k
\end{align*}

i.e., $\gamma_k$ can be retrieved as $u_k ^\top M_{2L}(W,W)  u_k$, since $\{\tilde{\mu}_k \}_{k=1}^K$ are orthonormal. Thus, we can make sure that $\mu_k$ and $\gamma_k$ will correspond to the same topic $k$ for $k = 1,2 \dots K$.

Therefore,
\begin{align*}
\label{eqn:L}
Q& =\big[ \gamma_1| \gamma_2| \dots |\gamma_K \big] \\
& =\big[ u_1 ^\top M_{2L}(W,W)  u_1| u_2 ^\top M_{2L}(W,W)  u_2 | \dots | u_K ^\top M_{2L}(W,W)  u_K \big] \\
& = \left[u_1^\top | u_2 ^\top | \dots |u_K^\top\right] M_{2L}(W,W) \left [u_1 | u_2 | \dots |u_K \right] \\
& = U^\top M_{2L}(W,W) U\\
& = M_{2L}(WU,WU)
\numberthis
\end{align*}

where, $U = \left [u_1 | u_2 | \dots |u_K \right] $ are all the $K$ eigenvectors of the tensor $\tilde{M}_3$.

\subsection{Label Prediction}

Once we have $O$ and $\pi$, the probability of a document $d $ given $h$ can be expressed as,

\begin{equation}
P\big[d|h=k\big] =  \prod_{v \in \mathcal{W}_d}P\big[v|h=k\big]
\end{equation}
where $\mathcal{W}_d$ is the set of distinct words in the document $d$. 

Then the document probabilities $P\big[ h=k|d \big]$ can be estimated using Bayes Rule. 
\begin{align*}
 P\big[h=k|d\big]  &=\frac{P\big[h=k\big]\prod_{v \in \mathcal{W}_d}P\big[ v|h=k \big]}{\sum_{k=1}^K P\big[h=k\big] \prod_{v \in \mathcal{W}_d}P\big[ v|h=k\big]} =\frac{\pi_k \prod_{v \in \mathcal{W}_d}O_{vk}}{\sum_{k=1}^K \pi_k\prod_{v \in \mathcal{W}_d}O_{vk}}
  \numberthis \label{eqn:personalization}
\end{align*}

Then the probability of a label $l$ for the document can be computed as,

\begin{align*}
P\big[l | d\big] & = \sum_{k=1}^K P\big[l | h=k\big]P\big[h=k\big| d]  = \sum_{k=1}^K Q_{lk}P\big[h=k\big| d] 
\numberthis
\end{align*}

The labels are ranked by the probabilities $P\big[l | d\big]$, and the labels with highest ranks are assigned to the document. If the number of unique words in a test document is $n_d = |\mathcal{W}_d|$, then the prediction step has a complexity of $\Theta\left((n_d+L)K\right)$ to compute the probability for all $L$ labels.

\section{Moment Estimation}
\label{sec:implementation}
The matrices $M_2$ as well as the tensors $M_3$ and $M_{2L}$ are defined on population. We cannot compute the population parameters; all we can do is to estimate them from the sample corpus and compute an error bound for the estimation. We denote the estimated values of $M_2$, $M_3$, $M_{2L}$, $W$, $U$, $O$, $Q$ and $\pi$ from the sample, as $\hat{M}_2$, $\hat{M}_3$, $\hat{M}_{2L}$, $\hat{W}$, $\hat{U}$, $\hat{O}$, $\hat{Q}$ and $\hat{\pi}$ respectively, conforming with the notations used in \cite{MoM}.
 
We create an estimation of the joint probability of the words ($M_2$) by counting the pairwise occurrence of the words in all the documents, and normalizing by the sum. If $X \in \mathbb{R}^{N \times  D}$ is the binary sparse binary matrix representing the data, then the pairwise occurrence matrix can be estimated by $X^\top X$, whose sum of all elements is,
\begin{align*}
\sum_v \sum_v X^\top X & = \sum_v \sum_v \sum_{i=1}^N x_i^\top x_i =\sum_{i=1}^N  \sum_v \sum_v x_i^\top x_i =\sum_{i=1}^N nnz(x_i)^2 \\
\end{align*}
where $x_i \in \mathbb{R}^D$ is the row of $X$ corresponding to the $i$th document, and $nnz(x_i)$ is the total number of the words in that document. Therefore, $M_2$ can be estimated as,

\begin{equation}
\hat{M}_2 = \frac{1}{\sum_{i=1}^N{nnz(x_i)^2}}X^\top X
\end{equation}

Computing $\hat{M}_2$ takes a single pass through the entire corpus.

Similarly, the triple-wise occurrence tensor can be estimated as $X \otimes X \otimes X$, and the sum of all of the elements of the tensor is $\sum_v \sum_v \sum_v X \otimes X \otimes X = \sum_{i=1}^N nnz(x_i)^3$. Therefore, $M_3$ can be estimated as,
\begin{equation}
\hat{M}_3 = \frac{1}{\sum_{i=1}^N{nnz(x_i)^3}}X \otimes X \otimes X
\end{equation}

 \begin{algorithm*}[tb]
\addtolength\linewidth{-2ex}
   \caption{Method of Moments for Parameter Extraction}
   \label{alg:mom}
\begin{algorithmic}
   \STATE {\bfseries Input:} Sparse Data $ X \in \mathbb{R}^{N \times D}$, Label $ Y \in \mathbb{R}^{N \times L}$ and $K \in \mathbb{Z}^+$ \\{\bfseries 
   Output:} $ P\big[v|h\big]$, $ P\big[l|h\big] $ and $\pi$

\begin{enumerate}
\item Estimate \begin{equation}
                \hat{M}_2 = \frac{1}{ \sum_{i=1}^N {nnz(x_i)^2}} X^\top X   \tag*{(pass \#1)}
            \end{equation}
            
  \item Compute maximum eigenvalues $K$ of $\hat{M}_2$ as $\{ \nu_k \}_{k=1}^K$, and corresponding eigenvectors as  $\{ \omega_k \}_{k=1}^K$. Define $\Omega=\big[ \omega_1|\omega_2| \dots |\omega_K \big]$, and  $\Sigma = diag\left(\nu_1,\nu_2,\dots,\nu_K\right)$
  \item Estimate the whitening matrix $ \hat{W} = \Omega{\Sigma}^{-1/2}$ so that $ \hat{W}^\top \hat{M}_2 \hat{W} =I_{K\times K}$
  \item Estimate
       \begin{equation*} 
         \hat{\tilde{M}}_3= \frac{1}{\sum_{i =1}^N {nnz(x_i)^3}} X\hat{W} \otimes X\hat{W}\otimes X\hat{W}     \tag*{(pass \#2)}
    \end{equation*}

  \item Compute eigenvalues $\{ \lambda_k\}_{k=1}^K $ and eigenvectors $\{u_k\}_{k=1}^K $ of $\hat{\tilde{M}}_3$. Assign $\hat{U} = [ u_1 | u_2 \dots |u_K]$.
  \item Estimate the columns of $O$ as $\hat{\mu}_k =  \hat{W}^\dagger u_k $ and $\hat{\pi}_k= \lambda_k^{-2}$, 
  $\forall k \in 1,2 \dots K$ 
  \item Assign $\hat{O} = [\hat{\bar{\mu}}_1|\hat{\bar{\mu}}_2|\dots|\hat{\bar{\mu}}_K] $ \& $\hat{\pi}=[\hat{\pi}_1, \hat{\pi}_2 \dots \hat{\pi}_K]^\top $
  \item Estimate 
      \begin{equation*}
        \hat{Q} = \frac{1}{\sum_{i =1}^N {nnz(x_i)^2nnz(y_i)}} Y \otimes X\hat{W}\hat{U}  \otimes X\hat{W}\hat{U}  \tag*{(pass \#3)}
     \end{equation*}

  \item Estimate $P\big[ v|h=k \big]= \frac{\hat{O}_{vk}}{\sum_{v} {\hat{O}_{vk}}}, \forall k \in 1\dots K, v \in v_1\dots v_D$ \\
    $P\big[ l|h=k \big]= \frac{\hat{Q}_{lk}}{\sum_{l} {\hat{Q}_{lk}}}, \forall k \in 1\dots K, l \in l_1\dots l_L$
        
\end{enumerate}
\end{algorithmic}
\end{algorithm*}

The dimensions of $M_2$ and $M_3$ are $D^2$ and $D^3$ respectively, but in practice, these quantities are extremely sparse. $M_2$ has a total number of elements $\mathcal{O} \left( \sum_{i=1}^N nnz(x_i)^2 \right)$, with the worst case occurring when no two documents has any word in common, and all the pairwise counts are $1$. The whitening of $M_2$ is carried out through extracting the $K$ maximum eigenvalues and corresponding eigenvectors. This step is the bottleneck of the algorithm. We use the eigs function in Matlab for computing the eigenvalues, which uses Arnoldi's iterations, and has a complexity $\mathcal{O}\left( ( \sum_{i=1}^N nnz(x_i)^2 )K\right)$, since the sum of all non-zero entries in $M_2$ is $\mathcal{O} \left( \sum_{i=1}^N nnz(x_i)^2 \right)$ \cite{arnoldi}.

As for $M_3$, we do not need to explicitly compute it. Since $\tilde{M_3}=M_3(W,W,W)$, once we have estimated the whitening matrix, say $\hat{W}$, by whitening $\hat{M}_2$, we can estimate $\tilde{M_3}$ right away as,
\begin{equation}
\hat{\tilde{M}}_3=\frac{1}{\sum_{i =1}^N {nnz(x_i)^3}}X\hat{W} \otimes X\hat{W} \otimes X\hat{W}
 \end{equation}
 Computing $\hat{\tilde{M}}_3$ takes a second pass through the entire dataset, and has a complexity of $\mathcal{O}(NK^3)$.
 
 Similarly, if $Y \in \mathbb{R}^{N \times L}$ represents the labels for $N$ documents, then 
 \begin{equation}
 \sum_l \sum_v \sum_v Y \otimes X \otimes X = \sum_{i=1}^N nnz(y_i)nnz(x_i)^2
 \end{equation}
  where $y_i$ is the label vector of $i$th document, and $nnz(y_i)$ is the total number of labels assigned to that document. 
  
 Therefore, $M_{2L}$ can be estimated as,
 \begin{equation}
\hat{M}_{2L}=\frac{1}{\sum_{i =1}^N {nnz(x_i)^2nnz(y_i)}} Y \otimes X \otimes X
 \end{equation}
 
We do not need to compute $\hat{M}_{2L}$ either. Since $Q = M_{2L}(WU,WU)$ from Equation \ref{eqn:L}, once we obtain the eigenvectors $\hat{U}$ of $\hat{\tilde{M}}_3$, we can estimate $Q$ right away as,
 
\begin{equation}
\hat{Q} = \frac{1}{\sum_{i =1}^N {nnz(x_i)^2nnz(y_i)}} Y \otimes X\hat{W}\hat{U}  \otimes X\hat{W}\hat{U}
 \end{equation}
 
 This step has a complexity of $\mathcal{O}(K^2 \sum_{i =1}^N nnz(y_i))$. The entire algorithm is outlined as Algorithm \ref{alg:mom}. The eigenvalue decomposition of $\tilde{M}_3$ has a complexity of $\mathcal{O}\left(K^4\log(1/\epsilon) \right)$ to compute each of the $K$ eigenvectors up to an accuracy of $\epsilon$ \cite{TTB_SSHOPM}. The overall complexity is 
 \begin{align*}
 \mathcal{O}{\left(  \Big(\sum_{i=1}^N nnz(x_i)^2 \Big)K + K^2 \sum_{i =1}^N nnz(y_i) + NK^3 + K^4\log(1/\epsilon)\right)}
 \end{align*}
 We used the Tensor Toolbox \cite{TTB_Software} for tensor decomposition. Once the matrix $\hat{O}$  and  $\hat{\pi}$ are estimated, it requires one more pass through the entire dataset to compute $\hat{Q}$, resulting in a total of three passes to extract all parameters. The label prediction step has a complexity of $\Theta\left((n_d+L)K\right)$ for a document with distinct number of words $n_d$.

 \section{Convergence Bound on Parameters}
\newtheorem{theorem}{Theorem}
\label{thm:bound}
\begin{theorem}
\label{thm:bound}
Let us assume that we draw $N$ i.i.d samples $x_1, x_2 \dots x_N$ with labels $y_1,y_2 \dots y_N$ using the generative process in Equation \ref{1}. Let us define $\varepsilon_1 = \left( 1 + \sqrt{\frac{\log (1 / \delta)}{2}}\right)$, and  $\varepsilon_2 = \left( 1 + \sqrt{\frac{\log (2 / \delta)}{2}}\right)$ for some $\delta \in (0,1)$. Then, if the number of samples $N \ge \max ( n_1, n_2, n_3)$, where

\begin{itemize}
\item  $n_1 =  c_2\left( \log{K} + \log{\log{\left(\frac{K}{c_1}   \cdot \sqrt{\frac{\pi_{max}}{\pi_{min}}} \right)}} \right)$
\item  $n_2 =  \Omega\left( \left(  \frac{\varepsilon_1}{\tilde{d}_{2s}\sigma_K(M_2)} \right)^2 \right)$
\item  $n_3 =  \Omega\left( K^2 \left( \frac{10}{\tilde{d}_{2s}\sigma_K(M_2)^{5/2}}  + \frac{2\sqrt{2}}{\tilde{d}_{3s}\sigma_K(M_2)^{3/2}} \right)^2 \varepsilon_1^2 \right) $
\end{itemize}

for some constants $c_1$ and $c_2$, and we run Algorithm \ref{alg:mom} on these $N$ samples, then the following bounds on the estimated parameters hold with probability at least $1 - \delta$, 

\begin{align*}
 ||\mu_k-\hat{\mu}_k|| &   \le   \left( \frac{160\sqrt{\sigma_1(M_2)}}{\tilde{d}_{2s}\sigma_K(M_2)^{5/2}}  + \frac{32\sqrt{2\sigma_1(M_2)}}{\tilde{d}_{3s}\sigma_K(M_2)^{3/2}} + \frac{4 \sqrt{\sigma_1(M_2)} }{\tilde{d}_{2s}\sigma_K \left( M_2 \right)} \right) \frac{\varepsilon_1}{\sqrt{N}}\\
 ||\gamma_k-\hat{\gamma}_k|| &   \le  \left( \frac{160}{\tilde{d}_{2s}\sigma_K(M_2)^{7/2}}  + \frac{32\sqrt{2}}{\tilde{d}_{3s}\sigma_K(M_2)^{5/2}} +  \frac{ 2+2\sqrt{2}}{\tilde{d}_{2s}\sigma_K(M_2)^2} \right) \frac{2\varepsilon_1}{\sqrt{N}}  + \frac{8\varepsilon_2}{\tilde{d}_{ls}\sigma_K(M_2)\sqrt{N}}\\
 |\pi_k-\hat{\pi}_k| & \le  \left( \frac{200}{\sigma_K(M_2)^{5/2}}+ \frac{40\sqrt{2}}{\sigma_K(M_2)^{3/2}} \right)\frac{\varepsilon_1}{\tilde{d}_{3s}\sqrt{N}} 
\end{align*}

 where $\sigma_1(M_2) \dots \sigma_K(M_2)$ are the K largest eigenvalues of the pairwise probability matrix $M_2$, $\tilde{d}_{2s} = \frac{1}{N} \sum_{i=1}^N nnz(x_i)^2$, $\tilde{d}_{3s} =\frac{1}{N} \sum_{i=1}^N nnz(x_i)^3$ and $\tilde{d}_{ls} = \frac{1}{N} \sum_{i=1}^N nnz(x_i)^2nnz(y_i)$. 
\end{theorem}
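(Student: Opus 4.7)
The plan is to follow the standard recipe for method-of-moments analyses of latent-variable models (essentially the program developed in \cite{MoM}), specialized to the three estimators in Algorithm \ref{alg:mom} and to the document-length normalizers $\tilde{d}_{2s}$, $\tilde{d}_{3s}$, $\tilde{d}_{ls}$. The argument splits into three stages: (i) concentration of the empirical moments around their population counterparts; (ii) a deterministic perturbation analysis that propagates those concentration bounds through whitening, tensor decomposition, and the cross-moment projection; (iii) translation of the error tolerance required by the robust tensor power method into the sample-size hypotheses $n_1, n_2, n_3$.

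For stage (i), each of $\hat{M}_2$, $\hat{\tilde{M}}_3$, and $\hat{M}_{2L}$ is a sample average of a bounded i.i.d.\ quantity, so a bounded-differences (McDiarmid) inequality applies directly. Changing one document $x_i$ perturbs $\frac{1}{N\tilde{d}_{2s}}X^\top X$ in operator norm by at most $O(nnz(x_i)^2/(N\tilde{d}_{2s}))$, yielding, with probability at least $1-\delta$, a bound of the form $\|\hat{M}_2 - M_2\| \le \varepsilon_1/(\tilde{d}_{2s}\sqrt{N})$. The constant $\varepsilon_1 = 1+\sqrt{\log(1/\delta)/2}$ is exactly the McDiarmid deviation plus the mean-deviation term. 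The same reasoning yields $\|\hat{M}_3 - M_3\| = O(\varepsilon_1/(\tilde{d}_{3s}\sqrt{N}))$ and, after a union bound over the two moments entering the $\gamma_k$ estimator, $\|\hat{M}_{2L} - M_{2L}\| = O(\varepsilon_2/(\tilde{d}_{ls}\sqrt{N}))$ with $\varepsilon_2 = 1+\sqrt{\log(2/\delta)/2}$.

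For stage (ii), I would chain together four perturbation steps. First, Weyl's inequality and Davis--Kahan give $\|\hat{W} - W\|$ of order $\|\hat{M}_2 - M_2\|/\sigma_K(M_2)^{3/2}$ and $\|\hat{W}^\dagger - W^\dagger\|$ of order $\sqrt{\sigma_1(M_2)}\|\hat M_2 - M_2\|/\sigma_K(M_2)^2$. Second, plugging $\hat{W}$ into the empirical third moment yields $\|\hat{\tilde{M}}_3 - \tilde{M}_3\|$ bounded by a linear combination of $\|\hat{M}_2 - M_2\|$ and $\|\hat{M}_3 - M_3\|$ with coefficients of order $\sigma_K(M_2)^{-5/2}$ and $\sigma_K(M_2)^{-3/2}$ respectively (these are exactly the two summands inside the parentheses of $n_3$). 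Third, the robust tensor power method of \cite{MoM} then produces, with high probability after enough restarts, eigenvector and eigenvalue errors $\|u_k - \hat{u}_k\|$ and $|\lambda_k - \hat{\lambda}_k|$ both bounded by a constant multiple of $\|\hat{\tilde{M}}_3 - \tilde{M}_3\|$. Fourth, the identities $\mu_k = \lambda_k W^\dagger u_k$, $\pi_k = \lambda_k^{-2}$, and $\gamma_k = M_{2L}(Wu_k, Wu_k)$ are unwrapped via the bilinear expansion $\|Ax - \hat{A}\hat{x}\| \le \|A\|\|x-\hat{x}\| + \|A-\hat{A}\|\|\hat{x}\|$. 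Collecting the resulting terms and substituting the explicit McDiarmid constants reproduces the three displayed inequalities; in particular $\gamma_k$ picks up one contribution from each source (whitener error, tensor error, and cross-moment error), which is why its bound contains both $\varepsilon_1$ and $\varepsilon_2$ summands.

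Stage (iii) is where $n_1, n_2, n_3$ enter: the restart count in the robust tensor power method must exceed $\Omega(K\log\log(K/c_1 \cdot \sqrt{\pi_{\max}/\pi_{\min}}))$, giving $n_1$; $n_2$ ensures $\|\hat{M}_2 - M_2\|$ is small enough that the Weyl/Davis--Kahan bound on $\hat{W}$ is informative; and $n_3$ ensures that the whitened third-moment perturbation is below the threshold that triggers the quadratic-convergence regime of the power-method guarantee. The main obstacle is the bookkeeping in stage (ii): four compounded perturbations generate a sum of roughly a dozen terms, and matching the printed constants ($160$, $32\sqrt{2}$, $200$, $40\sqrt{2}$, etc.) requires carefully tracking which factors of $\sqrt{\sigma_1(M_2)}$ versus $\sigma_K(M_2)^{-k/2}$ each error route contributes. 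The cleanest strategy is to carry every intermediate bound symbolically in the form $C\,\varepsilon/(\tilde{d}_{\bullet}\sqrt{N})$ and only substitute the explicit numerical constants at the final collection step.
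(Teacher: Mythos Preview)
Your three-stage plan is exactly the program the paper carries out: McDiarmid-type concentration for $\hat M_2,\hat M_3,\hat M_{2L}$ (with the $\tilde d_{\bullet}$ normalizers appearing for the reason you state), Weyl plus a whitening perturbation bound feeding into the robust tensor power method of \cite{MoM}, and then the bilinear expansion $\|Ax-\hat A\hat x\|\le\|A\|\|x-\hat x\|+\|A-\hat A\|\|\hat x\|$ applied to $\mu_k=W^\dagger u_k$, $\pi_k=\lambda_k^{-2}$, and $\gamma_k=M_{2L}(Wu_k,Wu_k)$. The split of the $\gamma_k$ bound into an $\varepsilon_1$ part and an $\varepsilon_2$ part arises in the paper for precisely the union-bound reason you give.

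One technical point where the paper differs from your sketch: it does \emph{not} invoke Davis--Kahan to control $\|W-\hat W\|$ and $\|W^\dagger-\hat W^\dagger\|$. Instead it uses the direct construction: let $\hat W^\top M_2\hat W=ADA^\top$ be an eigendecomposition, set $W=\hat W A D^{-1/2}A^\top$ (which whitens $M_2$), and then bound $\|I-D\|_2=\|\hat W^\top(\hat M_2-M_2)\hat W\|_2\le\|\hat W\|_2^2\,\varepsilon_{M_2}$. This yields $\varepsilon_W\le 2\sigma_K(M_2)^{-3/2}\varepsilon_{M_2}$ and $\varepsilon_{W^\dagger}\le 2\sqrt{\sigma_1(M_2)}\sigma_K(M_2)^{-1}\varepsilon_{M_2}$; note the latter is one power of $\sigma_K(M_2)$ better than the $\sigma_K(M_2)^{-2}$ you quote, and that difference is what produces the displayed constants in the $\mu_k$ bound. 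Davis--Kahan would work too but requires an extra step to pass from subspace angles to whitening-matrix perturbation, and typically costs an additional eigengap factor. Also, $n_1$ in the paper is the iteration-count hypothesis of the robust power method (not the restart count $L$), which the paper somewhat confusingly denotes by the same symbol $N$ as the sample size.
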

The proof is included in the appendix.

 \begin{table*}[t]
 \scriptsize
\begin{center}
\begin{tabular}{|>{\centering\arraybackslash}p{1cm}|>{\centering\arraybackslash}p{1.1cm}|>{\centering\arraybackslash}p{1.1cm}|>{\centering\arraybackslash}p{1.1cm}|>{\centering\arraybackslash}p{1cm}|>{\centering\arraybackslash}p{1.4cm}|>{\centering\arraybackslash}p{1.4cm}|>{\centering\arraybackslash}p{1.1cm}|>{\centering\arraybackslash}p{1.1cm}|} \hline
Dataset  & Feature Dimension & Labels Dimension &  $\#$ of Train Points   &  $\#$ of Test Points  &  Average $\#$ of Non-Zero Fts. & Median $\#$ of Non-Zero Fts. & Average $\#$ of Labels & Median $\#$ of Labels \\  \hline
Bibtex & 1836 & 159 & 4880 & 2515 & 68.67 & 69 & 2.4 & 2\\ \hline
Delicious & 500 & 983 & 12920 & 3185 & 18.29 & 6 & 19.02 & 20 \\ \hline
NYTimes & 24670 & 4185 &  14669 & 15989 & 373.91 & 354 &  5.40 &   5 \\ \hline
Wiki-31K & 101938 & 30938 &14146 & 6616 & 669.05  & 513 & 18.64  &  19  \\ \hline
AmazonCat & 203882 & 13330 & 1186239 & 306782 & 71.09 & 45 & 5.04 & 4\\ \hline
WikiLSHTC & 1617899 & 325056 & 1778351 & 587084 & 42.15 &  30 &  3.19 &    2 \\ \hline
\end{tabular}
\end{center}
\caption{Description of the Datasets (Fts. stands for Features)}
\label{table:desc}	
\end{table*}

 \begin{table*}[tb]
 \footnotesize
\begin{center}
\begin{tabular}{>{\centering\arraybackslash}p{2cm}|>{\centering\arraybackslash}p{5.27cm}|>{\centering\arraybackslash}p{5.27cm}}
\small{True Labels}  & \small{LEML} & \small{MoM}  \\ \hline
"airlines and airplanes", "hijacking", "terrorism" & \textbf{"airlines and airplanes"} (0.34), \textbf{"terrorism"} (0.30), "united states international relations" (0.27), "elections" (0.22), "armament, defense and military forces" (0.18), "internationalrelations" (0.18), "bombs and explosives" (0.15), "murders and attempted murders" (0.13), "biographical information" (0.13), "islam" (0.12) & \textbf{"terrorism"} (0.12), "united states international relations" (0.08), \textbf{"airlines and airplanes"} (0.07), "world trade center (nyc)" (0.07), \textbf{"hijacking"} (0.07), "united states armament and defense" (0.07), "pentagon building" (0.03), "bombs and explosives" (0.03), "islam" (0.02), "missing persons" (0.02) \\ \hline
"armament, defense and military forces", "civil war and guerrilla warfare", "politics and government" & \textbf{"civil war and guerrilla warfare"} (0.62), "united states international relations" (0.39), "united states armament and defense" (0.23), \textbf{"armament, defense and military forces"} (0.23), "internationalrelations" (0.17), "oil (petroleum) and gasoline" (0.11), "surveys and series" (0.10), "military action" (0.09), "foreign aid" (0.08), "independence movements" (0.08) & "united states international relations" (0.09), \textbf{"civil war and guerrilla warfare"} (0.09), "united states armament and defense" (0.06), \textbf{"politics and government"} (0.04), \textbf{"armament, defense and military forces"} (0.03), "internationalrelations" (0.02), "immigration and refugees" (0.02), "foreign aid" (0.02), "terrorism" (0.02), "economic conditions and trends" (0.02) \\ \hline
\bottomrule	 
\end{tabular}
\end{center}
\caption{Examples of Label Prediction from NYTimes Dataset. The numbers in parenthesis indicate the predicted scores. The scores of LEML and MoM are not directly comparable, since the score of LEML can be negative, whereas the score of MoM lies within $[0,1]$}
\label{table:Demo}	
\end{table*}

\section{Experimental Results}

We used six datasets for our methods, as described in table \ref{table:desc}. The datasets range from small datasets like Bibtex with $4880$ training instances with $159$ labels to large datasets like WikiLSHTC with around $1.7$M training instances with $325$K labels. Since LEML is shown to outperform WSABIE and other benchmark algorithms on various small and large-scale datasets in \cite{LEML}, we benchmark the performance of our method against LEML. Also both LEML and MoM has similar model complexity due to similar number  $\left(\Theta\left((L+D)K\right)\right)$ of parameters for the same latent state dimensionality $K$. For LEML, we ran ten iterations for the smaller datasets (Bibtex and Delicious) and five iterations for the larger datasets, since the authors of LEML chose a similar number of iterations for their experiments in \cite{LEML}. We measured AUC (of Receiver Operating Characteristics (ROC)) against $K$. AUC is a versatile measure, and is used to evaluate the performance of classification as well as prediction algorithms \cite{BPR}. Also, it is shown that there exists a one-to-one relation between AUC and Precision-Recall curve in \cite{AUC}, i.e., a classifier with higher AUC will also achieve better Precision and Recall. We carried out our experiments on Unix Platform on a single machine with Intel i5 Processor (2.4GHz) and 16GB memory, and no multi-threading or any other performance enhancement method is used in the code. For AmazonCat and WikiLSHTC datasets, we ran LEML on an i2.4xlarge instance of Amazon EC2 with 122 GB of memory, since LEML needs significantly larger memory for these two datasets (Figure 2).

We computed AUC for every test documents and performed a macro-averaging across the documents, and repeated the experiments for $K = \{ 50, 75, 100, 125, 150\}$ (Figure 2). Both LEML and Method of Moments perform very similarly, but the memory footprint (Figure 2) of MoM is significantly less than LEML. MoM takes longer to finish for the smaller datasets like Bibtex or Delicious since tensor factorization takes much more time compared to the LEML iterations. However, for the larger datasets, each iteration of LEML becomes extremely costly, and MoM takes a fraction of the time taken by LEML. For WikiLSHTC dataset, LEML takes more than two days to finish, while MoM finished within a few hours. The runtime as well as speed-up is shown in Table \ref{table:Time} for $K=100$. Due to the large discrepancy between the runtime of LEML and MoM for the larger datasets, we do not give a detailed plot of runtime vs. $K$.

\begin{figure*}[t]
    \label{fig:result}
    \centering
        \begin{subfigure}[b]{0.49\textwidth}
            \centering
	{\includegraphics[ scale=0.17]{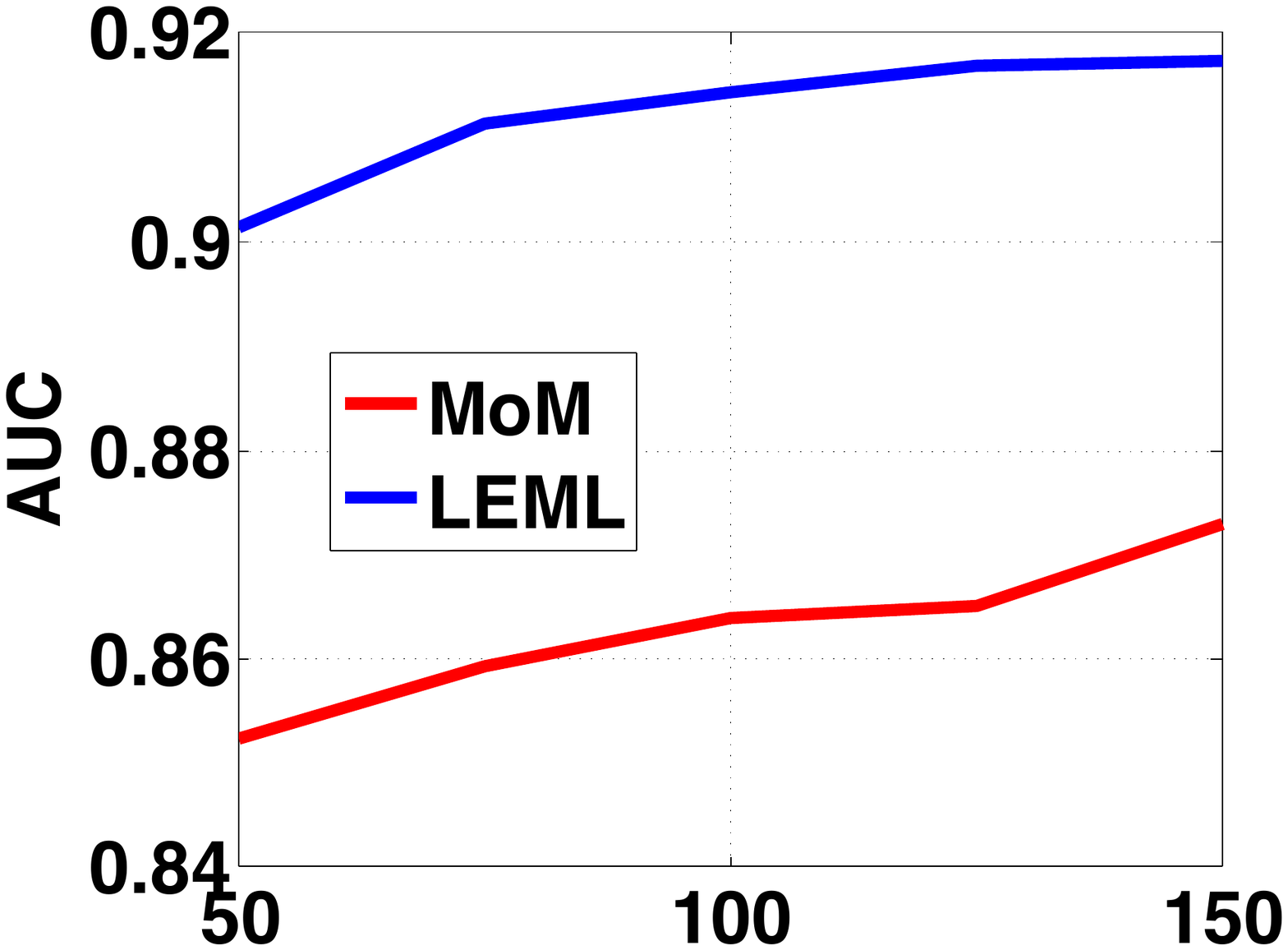}}
      	~
	{\includegraphics[ scale=0.17]{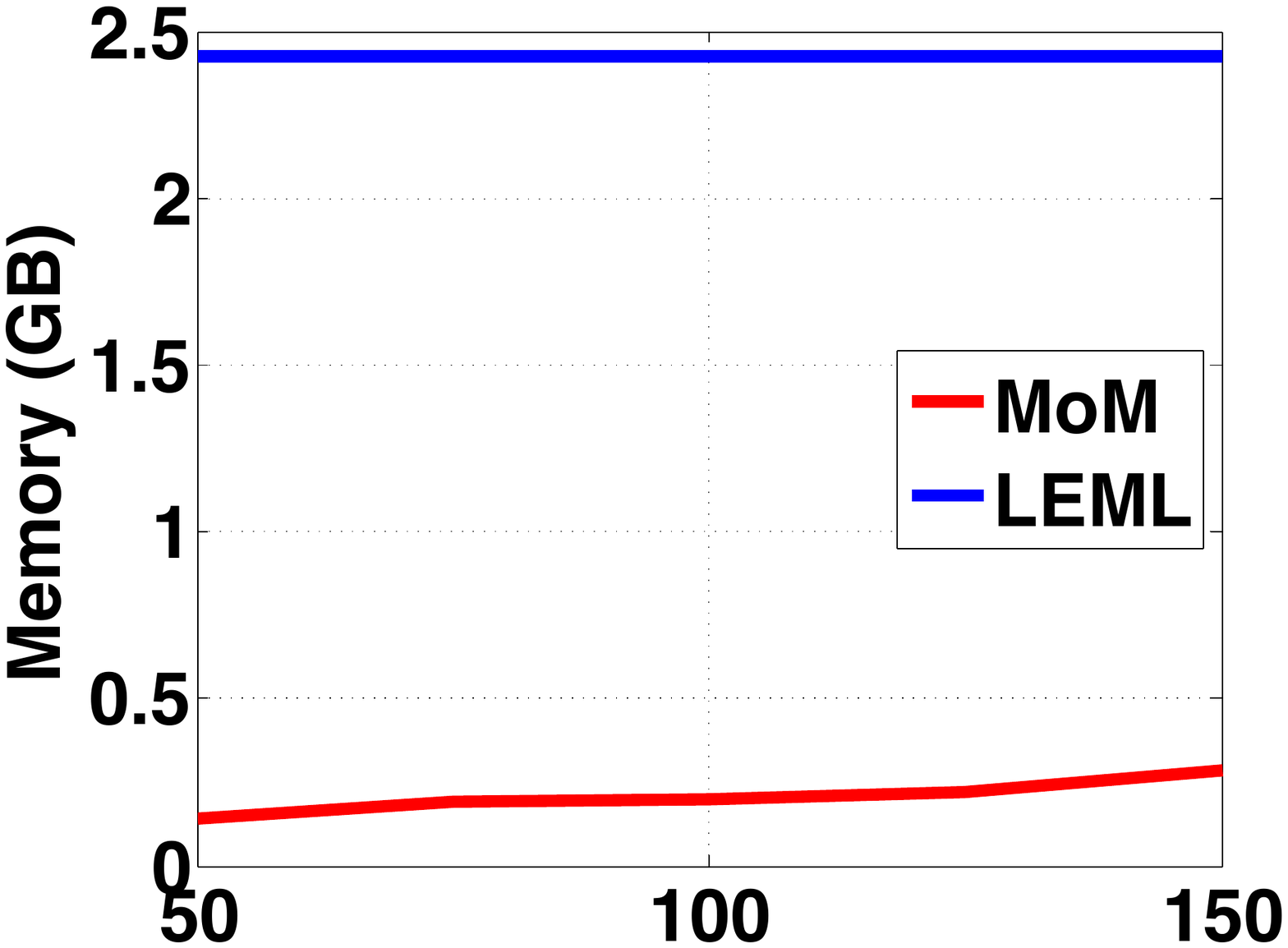}}  
	
	\caption{AUC and Memory (GB) of Bibtex Dataset}

    \end{subfigure}
    ~
        \begin{subfigure}[b]{0.49\textwidth}
            \centering
	{\includegraphics[ scale=0.17]{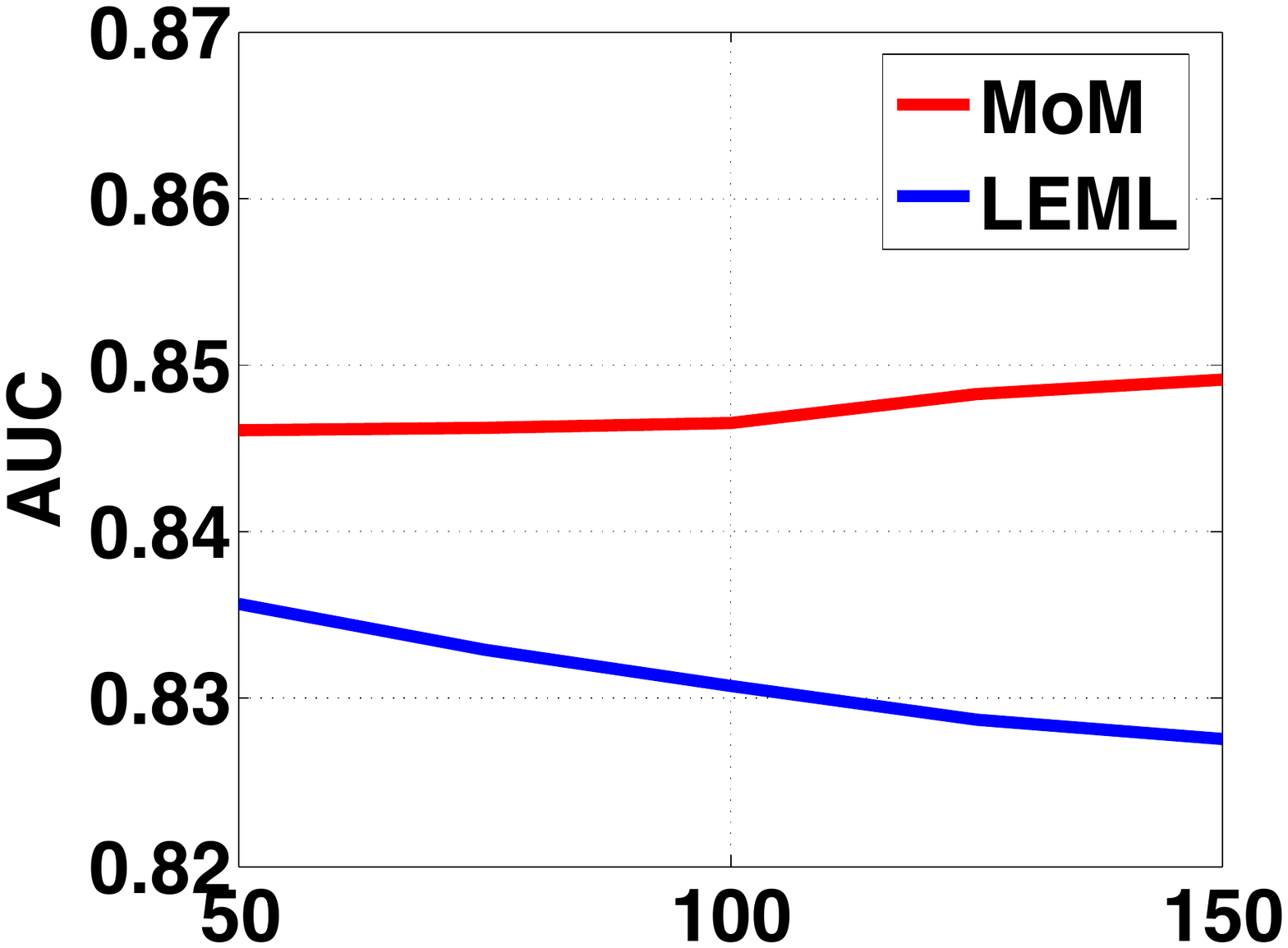}}
      	~
	{\includegraphics[ scale=0.17]{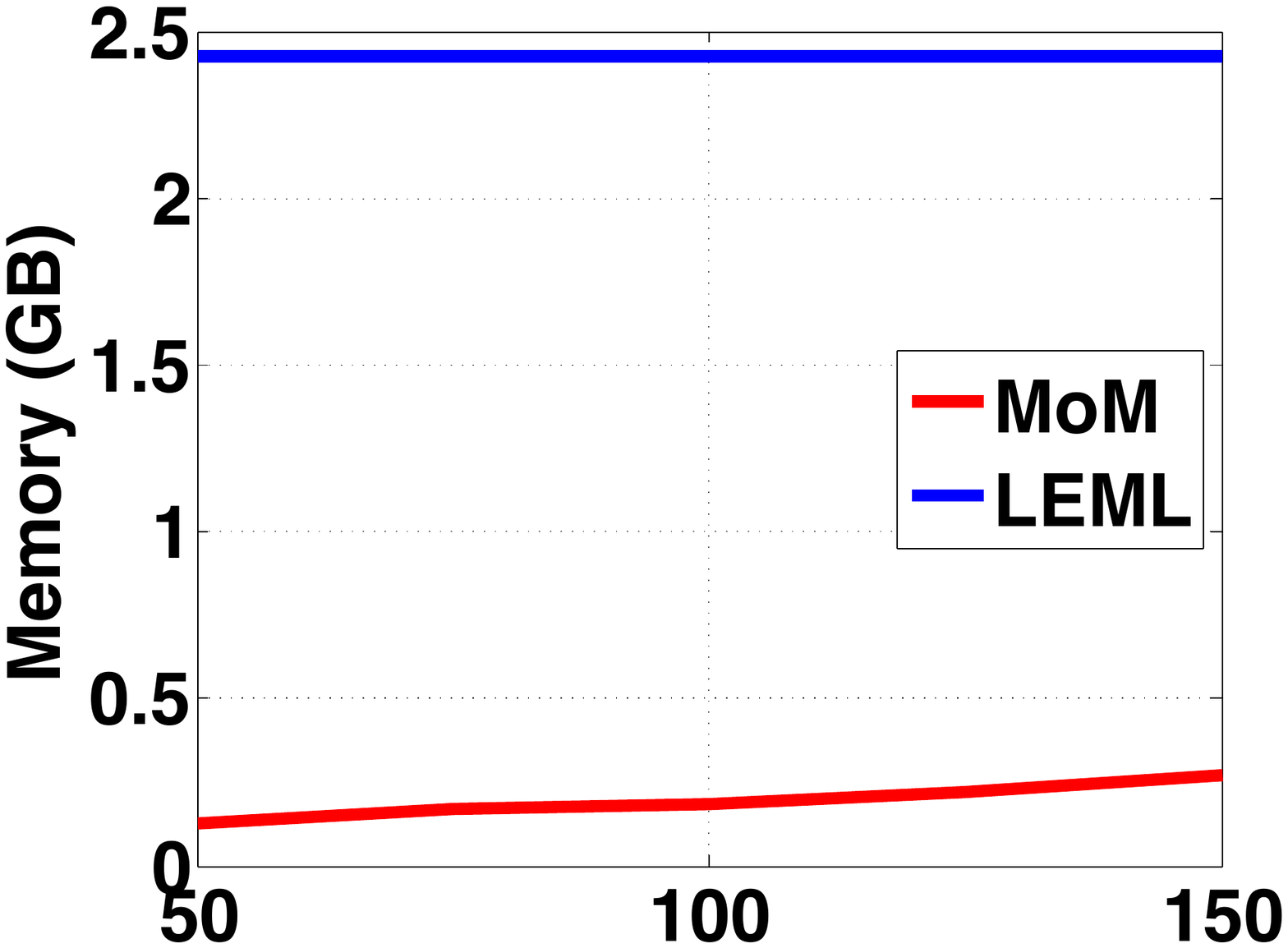}}  
	
	\caption{AUC and Memory (GB) of Delicious Dataset}

    \end{subfigure}
    
    \begin{subfigure}[b]{0.49\textwidth}
            \centering
	{\includegraphics[ scale=0.17]{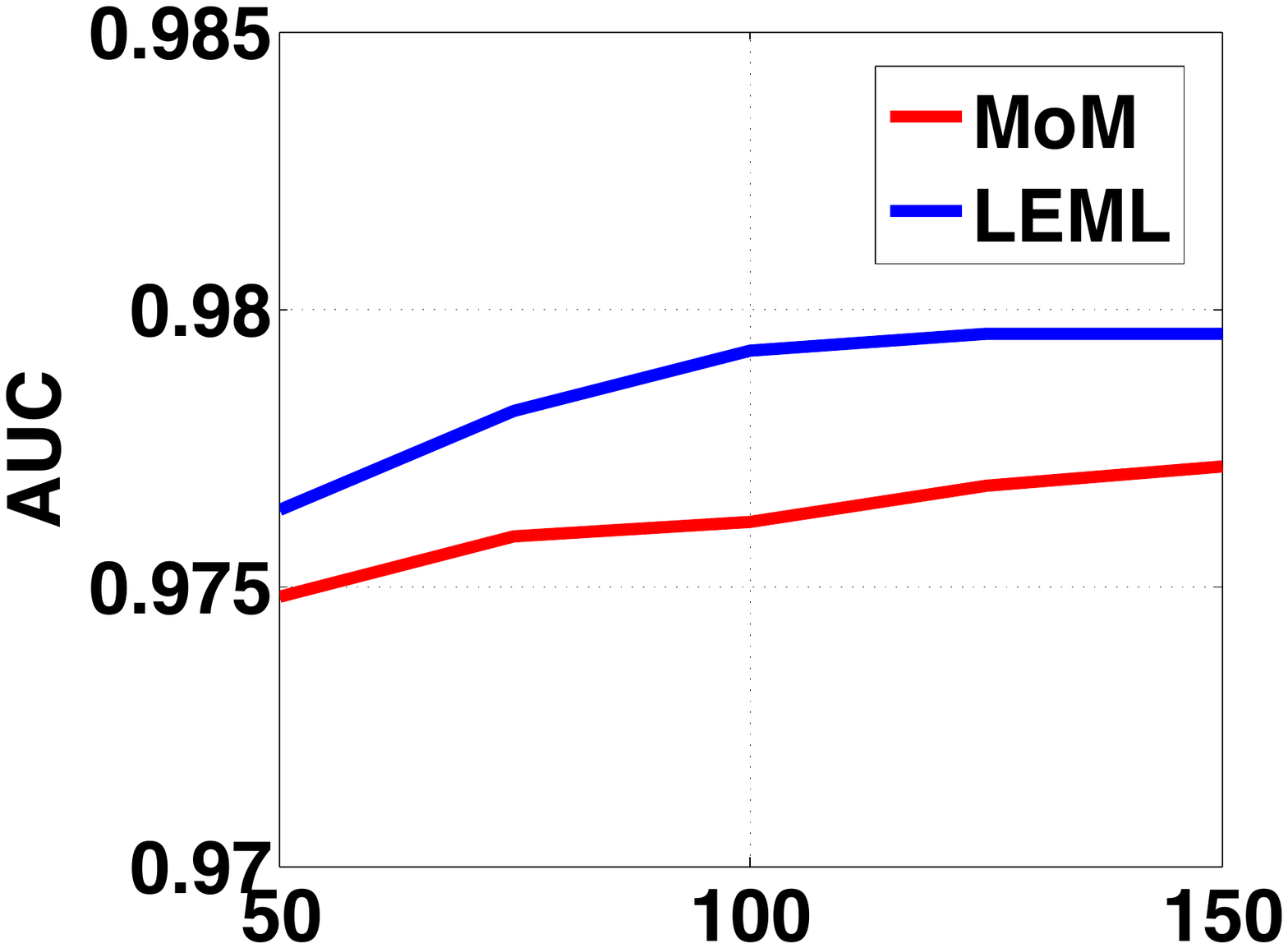}}
      	~
	{\includegraphics[ scale=0.17]{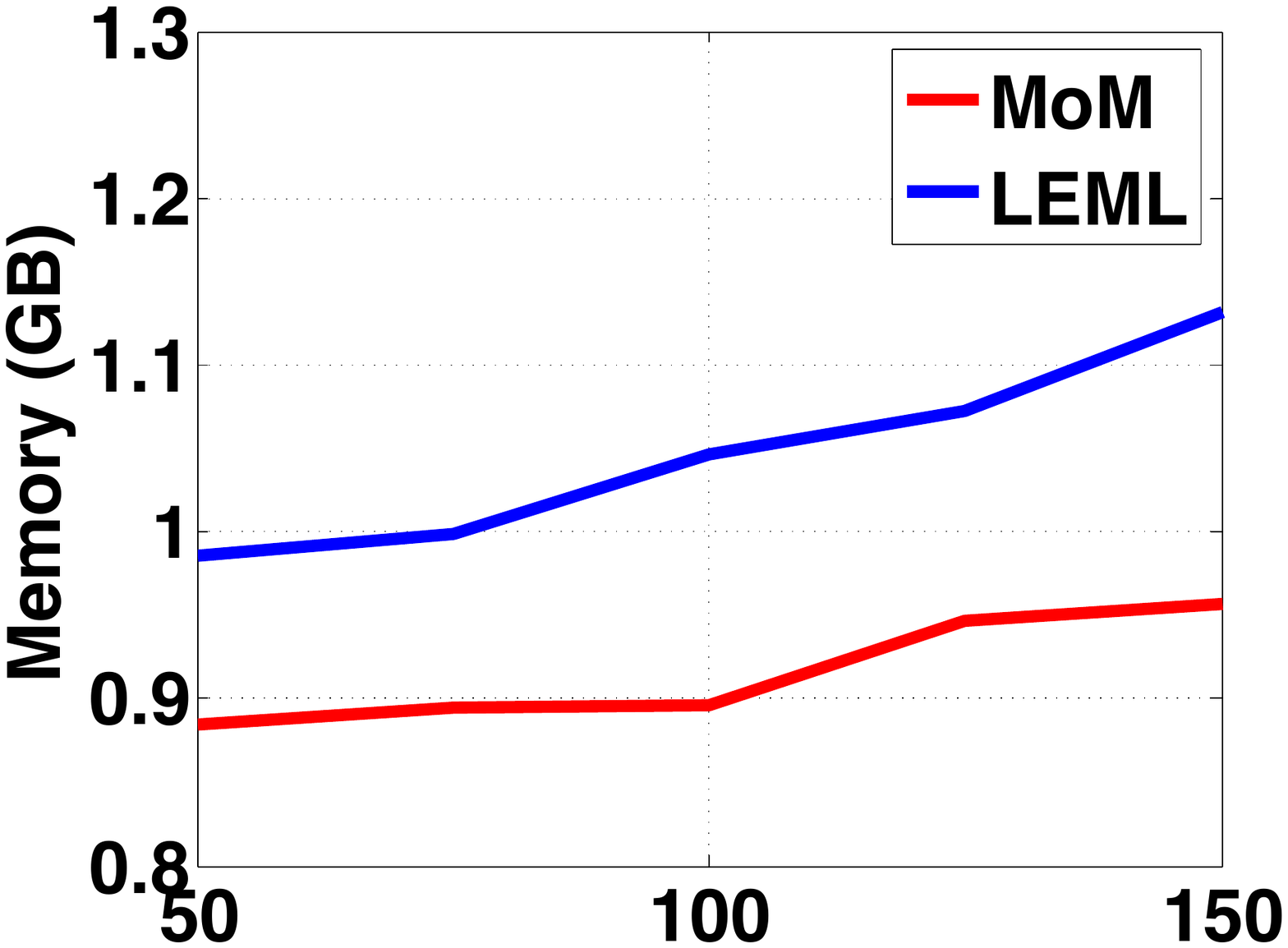}}  
	
	\caption{AUC and Memory (GB) of NYTimes Dataset}

    \end{subfigure}
    ~
     \begin{subfigure}[b]{0.49\textwidth}
            \centering
	{\includegraphics[ scale=0.17]{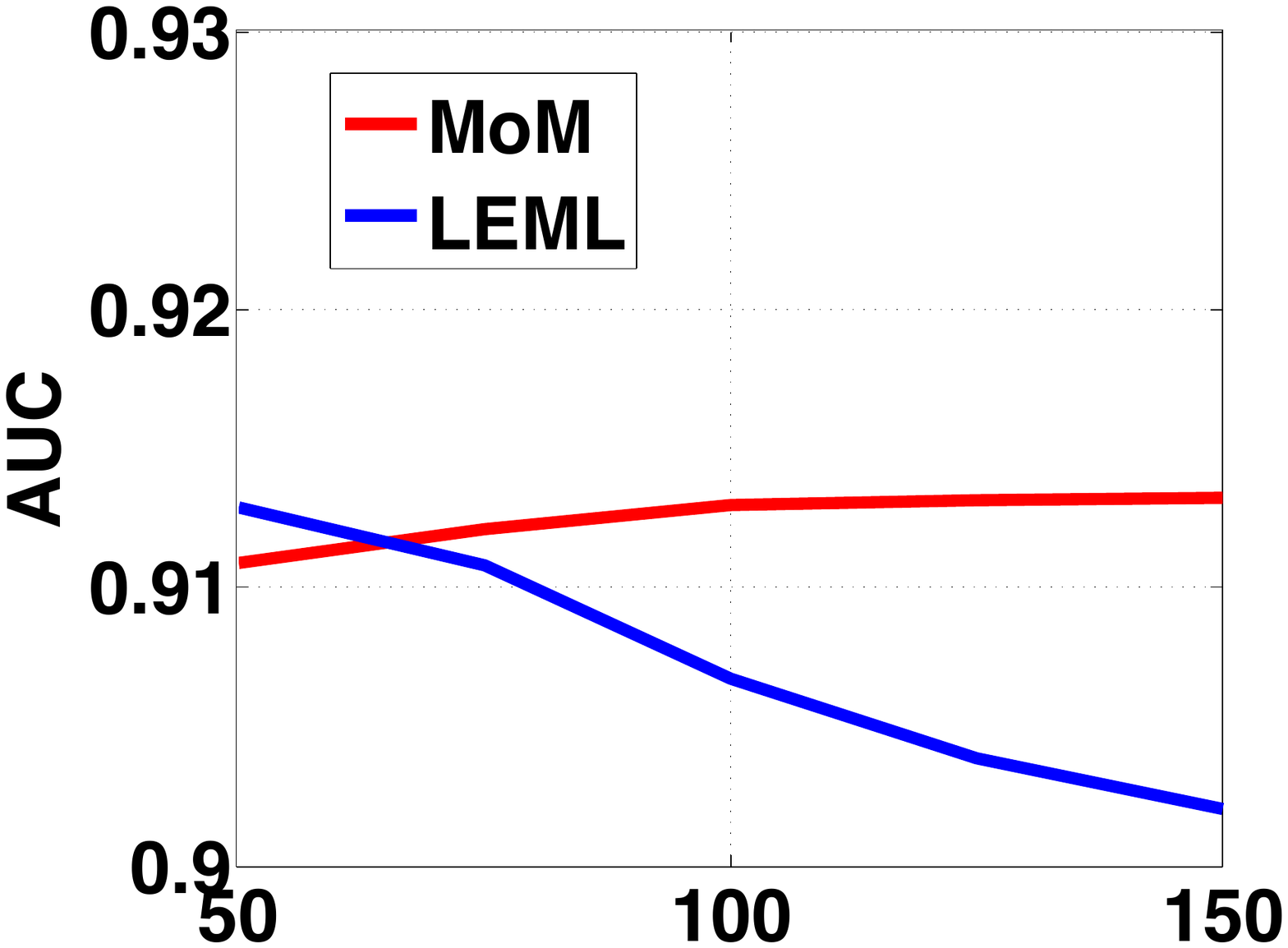}}
      	~
	{\includegraphics[ scale=0.17]{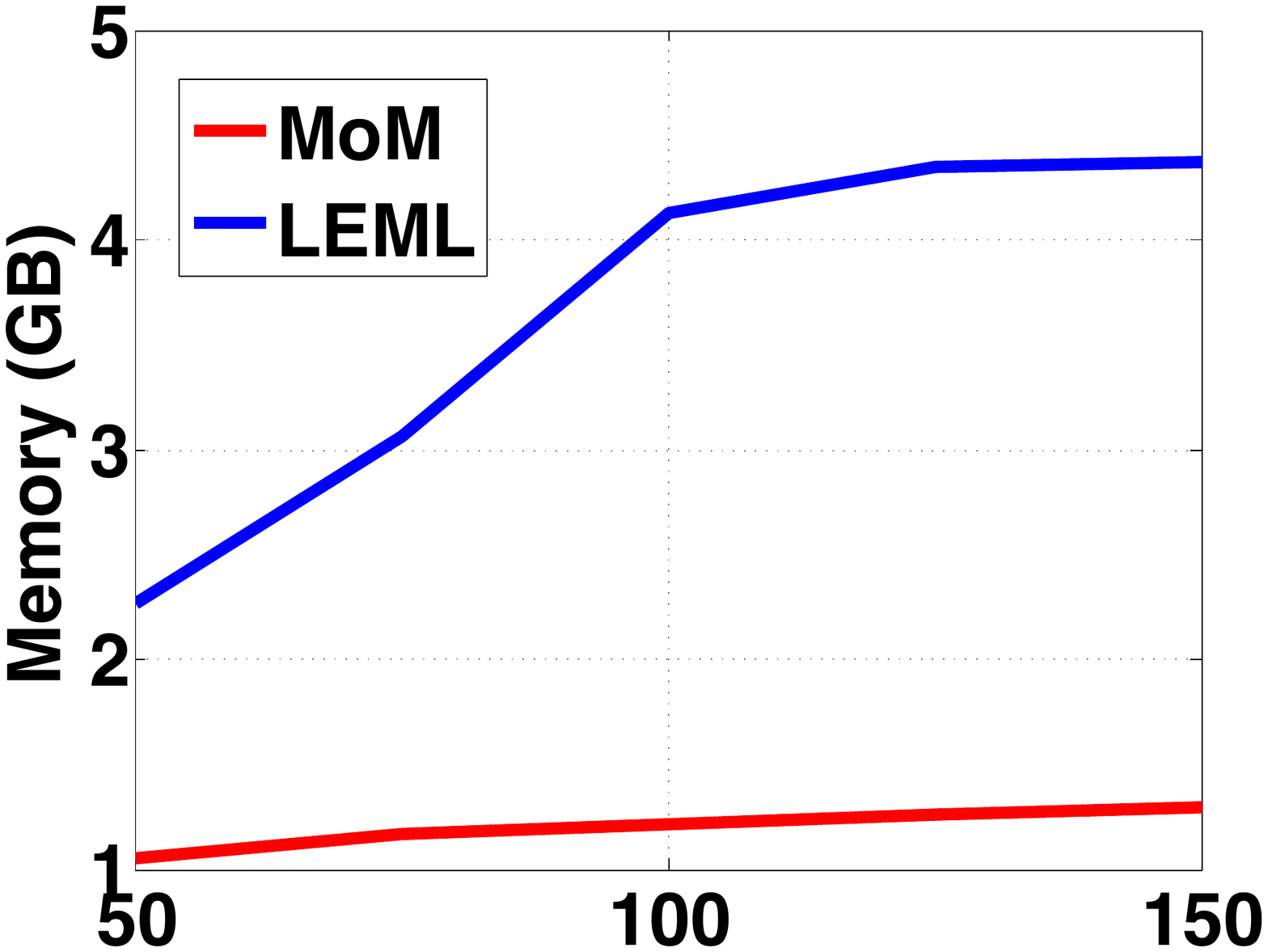}}  
	
	 \caption{AUC and Memory (GB) of Wiki-31K Dataset}
    \end{subfigure}
    
        \centering
    \begin{subfigure}[b]{0.49\textwidth}
            \centering
	{\includegraphics[ scale=0.17]{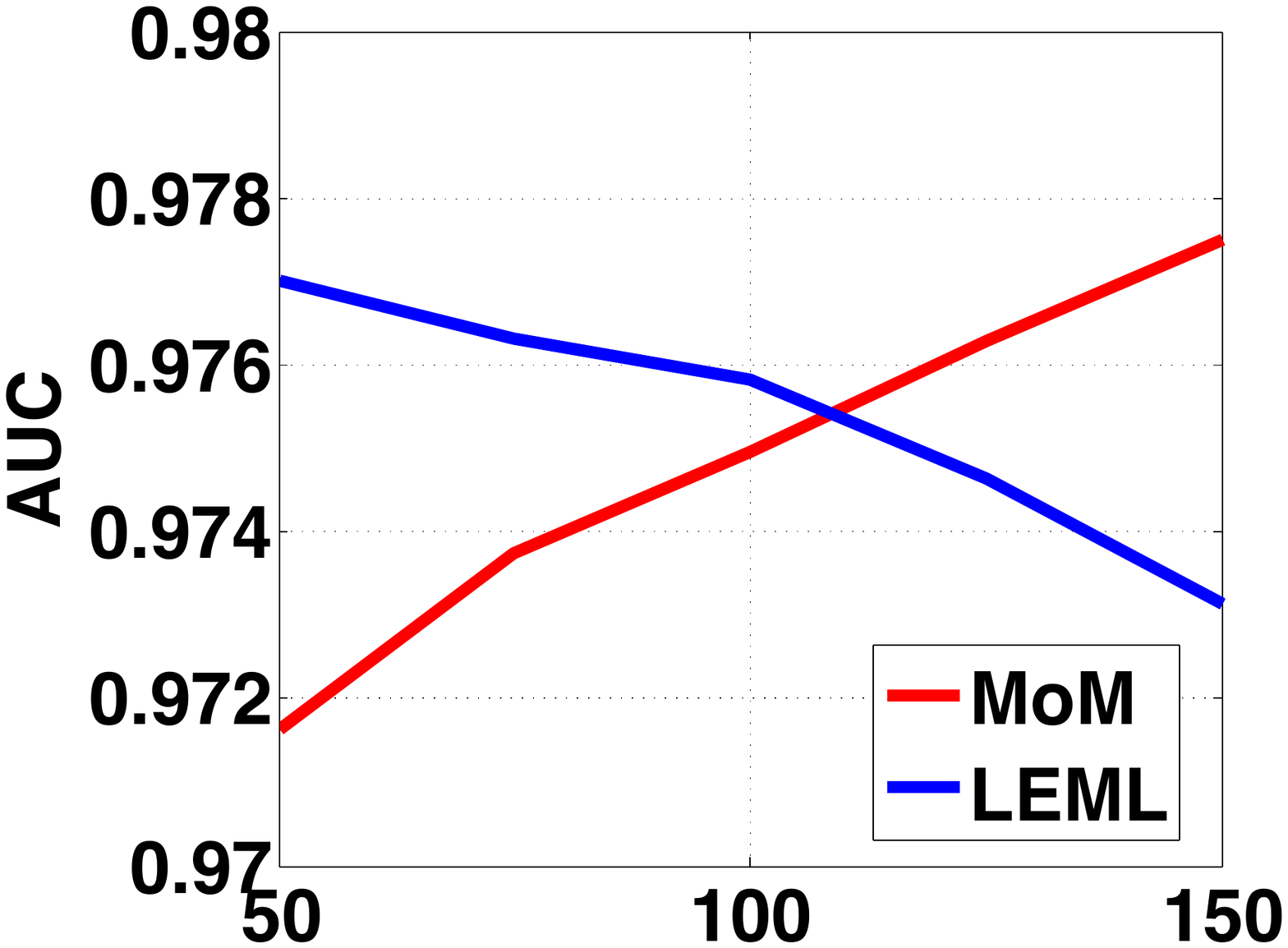}}
      	~
	{\includegraphics[ scale=0.17]{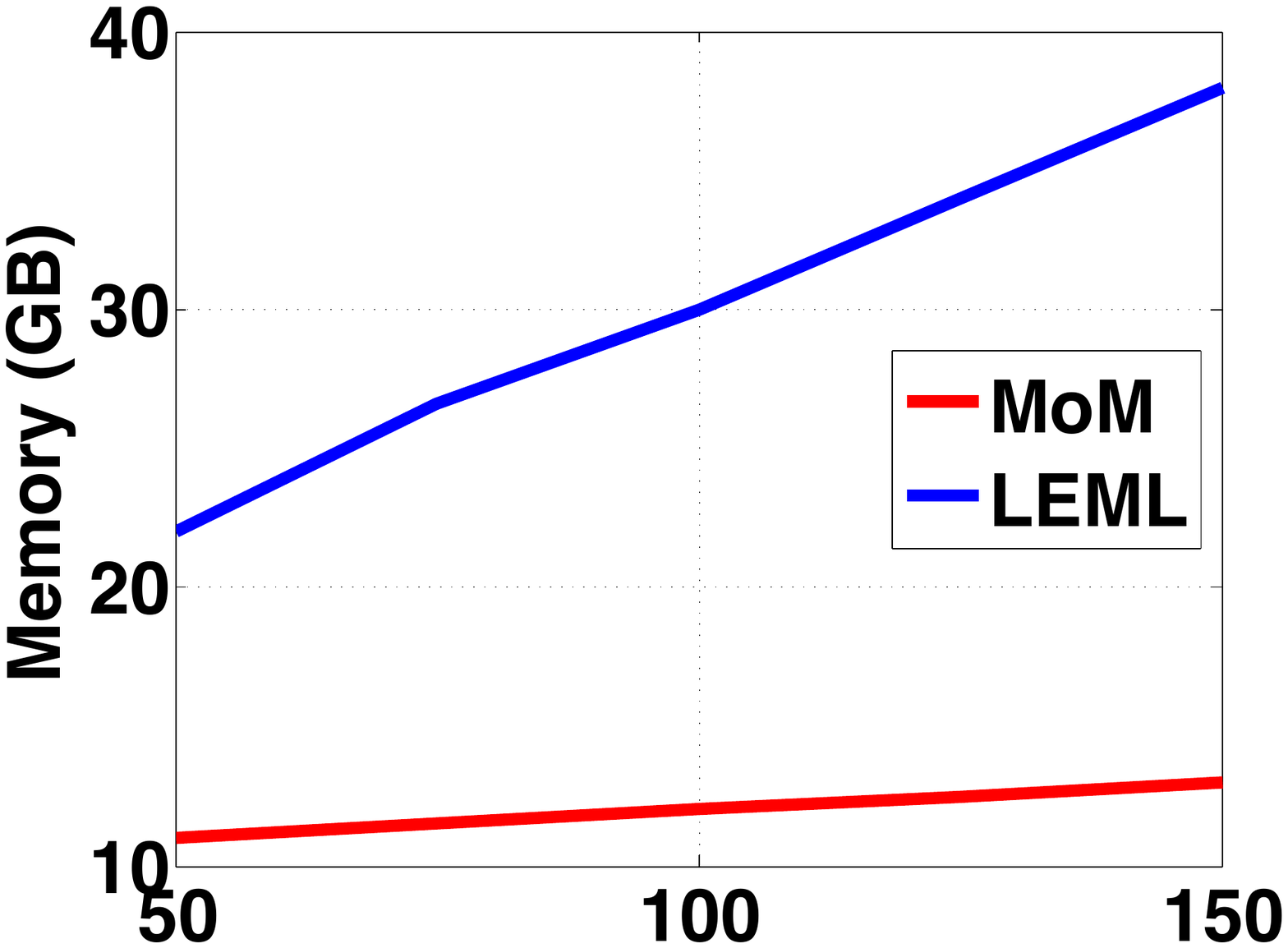}}  
	
	\caption{AUC and Memory (GB) of AmazonCat Dataset}

    \end{subfigure}
    ~
     \begin{subfigure}[b]{0.49\textwidth}
            \centering
	{\includegraphics[ scale=0.17]{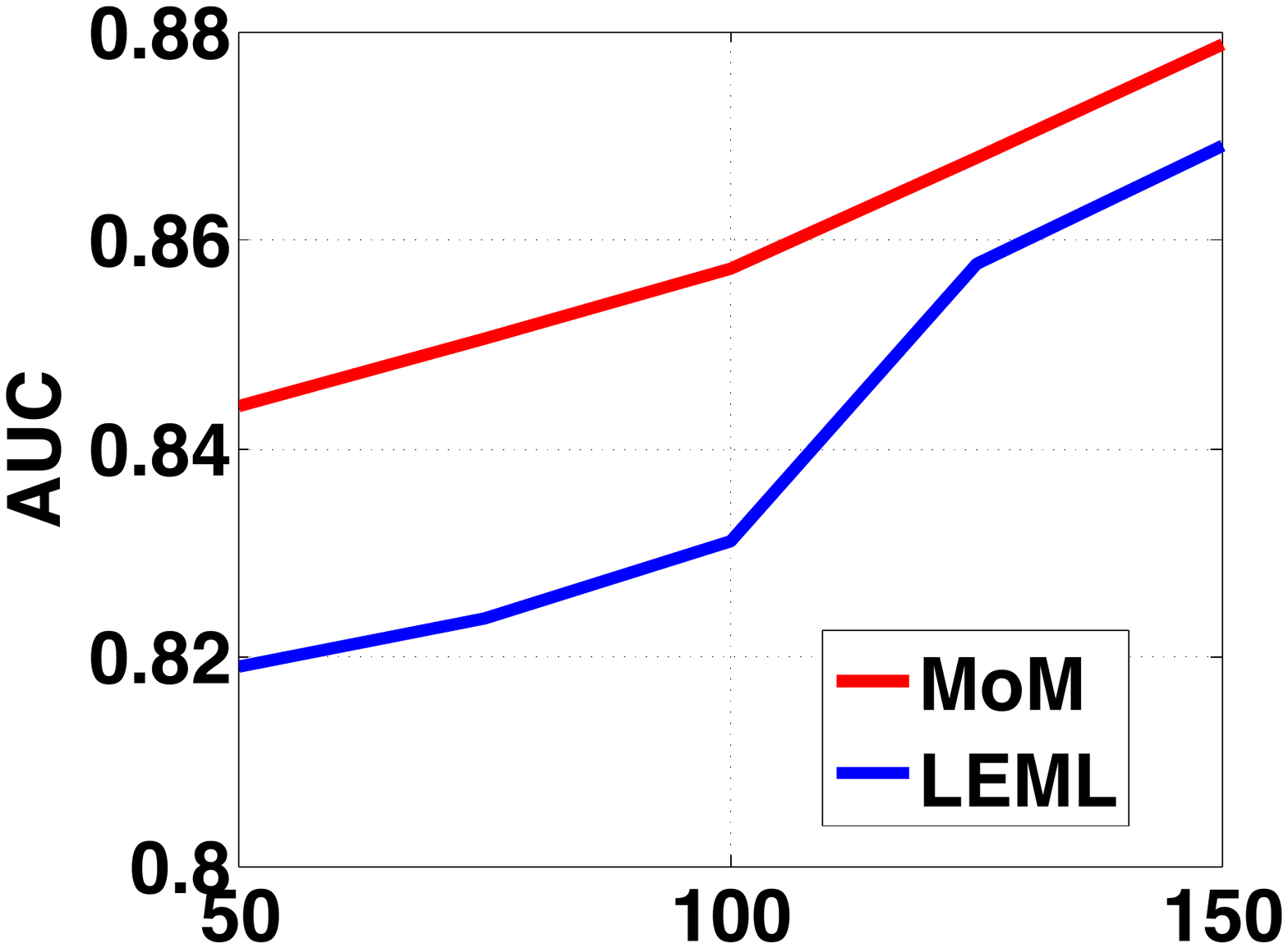}}
      	~
	{\includegraphics[ scale=0.17]{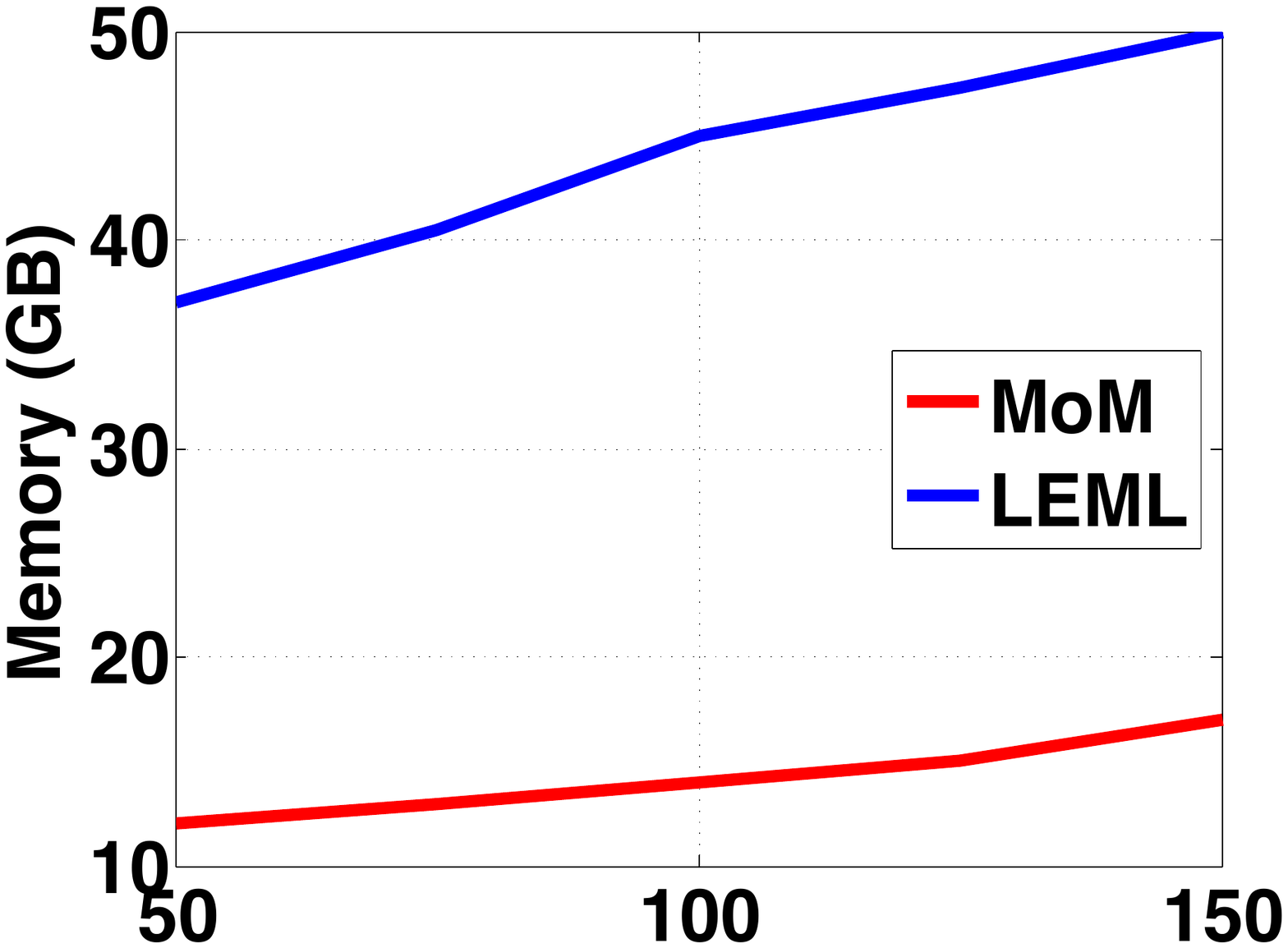}}  
	
	 \caption{AUC and Memory (GB) of WikiLSHTC Dataset}
    \end{subfigure}

    \caption{AUC and Memory vs Latent State Dimensionality$(K)$ for LEML and MoM for Different Datasets. LEML suffers from over-fitting in Delicious, Wiki-31K and AmazonCat Datasets, and the AUC decreases with increasing $K$ }
\end{figure*}

 \begin{table}[h]
 \scriptsize
\begin{center}
\begin{tabular}{>{\centering\arraybackslash}p{1.3cm}>{\centering\arraybackslash}p{1.3cm}>{\centering\arraybackslash}p{1.3cm}>{\centering\arraybackslash}p{1.9cm}}
Dataset  & LEML & MoM  & Speed-up (in $\times$) \\  \toprule
Bibtex & $160 s.$ & $300 s.$ & $0.53$  \\ \midrule
Delicious & $60 s.$ & $150 s.$ & $0.4$\\ \midrule
NYTimes & $1$ hour & $6$ min. & $10$\\ \midrule
Wiki-31K & $3 \text{ hour } 40 \text{ min }$ & $15$ min & $15$  \\ \midrule
AmazonCat & $13 \text{ hour } ^\dagger$  &  $1 \text{ hour }15 \text{ min}$ & $10$  \\ \midrule
WikiLSHTC & $>2$ days$^\dagger$ & $\sim 3$ hours & $16$  \\ \midrule
\bottomrule
\multicolumn{3}{l}{$\dagger$ Runtime on i2.4xlarge instance of Amazon EC2}\\	 
\end{tabular}
\end{center}
\caption{Training Time and Speed-up}
\label{table:Time}	
\end{table}

\section{Conclusion}

Here we propose a method for multi-label prediction for large-scale datasets based on moment factorization. Our method (MoM) gives similar performance in comparison with state-of-art algorithms like LEML while taking a fraction of time and memory for the larger datasets. MoM takes only three passes through the training dataset to extract all the parameters. Since MoM consists of only linear algebraic operations, it is embarrassingly parallel, and can easily be scaled up in any parallel eco-system using linear algebra libraries. In our implementation, we used Matlab's linear algebra library based on LAPACK/ARPACK, although we did not incorporate any parallelization.

Both LEML and MoM have error bound of $\mathcal{O}(1/\sqrt{N})$ on training performance w.r.t. the number of training samples $N$. However, when we compute the AUC on test data, the AUC of LEML decreases with latent dimensionality$(K)$ for some datasets, including the larger dataset of AmazonCat containing more than $1$M training instance. This shows the possibility of over-fitting in LEML. MoM, on the other hand, is not an optimization algorithm, and the parameters are extracted from Moment Factorization rather than optimizing any target function. It is not susceptible to over-fitting, which is evident from its performance. On the other hand, MoM has the requirement $N \ge \Omega(K^2)$ on the number of documents in the training set, and it will not work if $N < \Theta(K^2)$. However, for smaller text corpora where $N < \Theta(K^2)$ hold, 1-vs-all classifiers are usually sufficient to predict the labels. We need dimensionality reduction techniques for large text corpora where 1-vs-all classifiers fail, and MoM provides a very competitive choice for such cases.

 \nocite{TTB_SSHOPM} \nocite{sleec} \nocite{fastxml}

\bibliographystyle{splncs}
\bibliography{ICLR_RecSys}

\begin{thebibliography}{15}
\providecommand{\natexlab}[1]{#1}
\providecommand{\url}[1]{\texttt{#1}}
\expandafter\ifx\csname urlstyle\endcsname\relax
  \providecommand{\doi}[1]{doi: #1}\else
  \providecommand{\doi}{doi: \begingroup \urlstyle{rm}\Url}\fi

\bibitem[Agrawal et~al.(2013)Agrawal, Gupta, Prabhu, and Varma]{webquery}
Agrawal, Rahul, Gupta, Archit, Prabhu, Yashoteja, and Varma, Manik.
\newblock Multi-label learning with millions of labels: Recommending advertiser
  bid phrases for web pages.
\newblock In \emph{Proceedings of the 22nd international conference on World
  Wide Web}, pp.\  13--24. International World Wide Web Conferences Steering
  Committee, 2013.

\bibitem[Anandkumar et~al.(2012)Anandkumar, kai Liu, Hsu, Foster, and
  Kakade]{spectral}
Anandkumar, Anima, kai Liu, Yi, Hsu, Daniel~J., Foster, Dean~P, and Kakade,
  Sham~M.
\newblock A spectral algorithm for latent dirichlet allocation.
\newblock In Pereira, F., Burges, C.J.C., Bottou, L., and Weinberger, K.Q.
  (eds.), \emph{Advances in Neural Information Processing Systems 25}, pp.\
  917--925. Curran Associates, Inc., 2012.

\bibitem[Anandkumar et~al.(2014)Anandkumar, Ge, Hsu, Kakade, and
  Telgarsky]{MoM}
Anandkumar, Animashree, Ge, Rong, Hsu, Daniel, Kakade, Sham~M., and Telgarsky,
  Matus.
\newblock Tensor decompositions for learning latent variable models.
\newblock \emph{Journal of Machine Learning Research}, 15:\penalty0 2773--2832,
  2014.

\bibitem[Bader et~al.(2015)Bader, Kolda, et~al.]{TTB_Software}
Bader, Brett~W., Kolda, Tamara~G., et~al.
\newblock Matlab tensor toolbox version 2.6.
\newblock Available online, February 2015.

\bibitem[Bhatia et~al.(2015)Bhatia, Jain, Kar, Varma, and Jain]{sleec}
Bhatia, Kush, Jain, Himanshu, Kar, Purushottam, Varma, Manik, and Jain,
  Prateek.
\newblock Sparse local embeddings for extreme multi-label classification.
\newblock In \emph{Advances in Neural Information Processing Systems}, pp.\
  730--738, 2015.

\bibitem[Chaganty \& Liang(2013)Chaganty and Liang]{chaganty2013spectral}
Chaganty, Arun~Tejasvi and Liang, Percy.
\newblock Spectral experts for estimating mixtures of linear regressions.
\newblock \emph{arXiv preprint arXiv:1306.3729}, 2013.

\bibitem[Davis \& Goadrich(2006)Davis and Goadrich]{AUC}
Davis, Jesse and Goadrich, Mark.
\newblock The relationship between precision-recall and roc curves.
\newblock In \emph{Proceedings of the 23rd international conference on Machine
  learning}, pp.\  233--240. ACM, 2006.

\bibitem[Kolda \& Mayo(2011)Kolda and Mayo]{TTB_SSHOPM}
Kolda, Tamara~G. and Mayo, Jackson~R.
\newblock Shifted power method for computing tensor eigenpairs.
\newblock \emph{SIAM Journal on Matrix Analysis and Applications}, 32\penalty0
  (4):\penalty0 1095--1124, October 2011.
\newblock \doi{10.1137/100801482}.

\bibitem[Lee et~al.(2009)Lee, Balakrishnan, Koh, and Jiao]{arnoldi}
Lee, Jongwon, Balakrishnan, Venkataramanan, Koh, Cheng-Kok, and Jiao, Dan.
\newblock From o (k 2 n) to o (n): a fast complex-valued eigenvalue solver for
  large-scale on-chip interconnect analysis.
\newblock In \emph{Microwave Symposium Digest, 2009. MTT'09. IEEE MTT-S
  International}, pp.\  181--184. IEEE, 2009.

\bibitem[Prabhu \& Varma(2014)Prabhu and Varma]{fastxml}
Prabhu, Yashoteja and Varma, Manik.
\newblock Fastxml: A fast, accurate and stable tree-classifier for extreme
  multi-label learning.
\newblock In \emph{Proceedings of the 20th ACM SIGKDD international conference
  on Knowledge discovery and data mining}, pp.\  263--272. ACM, 2014.

\bibitem[Rendle et~al.(2009)Rendle, Freudenthaler, Gantner, and
  Schmidt-Thieme]{BPR}
Rendle, Steffen, Freudenthaler, Christoph, Gantner, Zeno, and Schmidt-Thieme,
  Lars.
\newblock Bpr: Bayesian personalized ranking from implicit feedback.
\newblock In \emph{Proceedings of the Twenty-Fifth Conference on Uncertainty in
  Artificial Intelligence}, pp.\  452--461. AUAI Press, 2009.

\bibitem[Rubin et~al.(2012)Rubin, Chambers, Smyth, and Steyvers]{rubin}
Rubin, Timothy~N, Chambers, America, Smyth, Padhraic, and Steyvers, Mark.
\newblock Statistical topic models for multi-label document classification.
\newblock \emph{Machine learning}, 88\penalty0 (1-2):\penalty0 157--208, 2012.

\bibitem[Wang \& Zhu(2014)Wang and Zhu]{slda}
Wang, Yining and Zhu, Jun.
\newblock Spectral methods for supervised topic models.
\newblock In \emph{Advances in Neural Information Processing Systems}, pp.\
  1511--1519, 2014.

\bibitem[Weston et~al.(2011)Weston, Bengio, and Usunier]{WSABIE}
Weston, Jason, Bengio, Samy, and Usunier, Nicolas.
\newblock Wsabie: Scaling up to large vocabulary image annotation.
\newblock In \emph{IJCAI}, volume~11, pp.\  2764--2770, 2011.

\bibitem[Yu et~al.(2014)Yu, Jain, Kar, and Dhillon]{LEML}
Yu, Hsiang-Fu, Jain, Prateek, Kar, Purushottam, and Dhillon, Inderjit~S.
\newblock Large-scale multi-label learning with missing labels.
\newblock In \emph{ICML}, volume~31, 2014.

\end{thebibliography}

\appendix
\section{Vector Norms} 

Let the joint probability matrix and the third order probability tensors of the population be $M_2=p(v,v)$ and $M_3=p(v,v,v)$. Let us assume that we select $N$ i.i.d. samples $x_1, \dots x_N$ from the population, and the estimates of joint probability matrix and third order probability tensor are $\hat{M}_2=\hat{p}(v,v)$ and $\hat{M}_3=\hat{p}(v,v,v)$. Let $\varepsilon_{M_2} = ||M_2 - \hat{M}_2||_2$, where $||\cdot||_2$ denotes the second order operator norm of the corresponding matrix or tensor. Let us assume $\varepsilon_{M_2} \le \sigma_K(M_2)/2$, where $\sigma_K$ is the $K$th largest eigenvalue of $M_2$. We will derive the conditions which satisfies this later. 

If $\Sigma=diag(\sigma_1,\sigma_2 \dots \sigma_K)$ are the top-K eigenvalues of $M_2$, and $U$ are the corresponding eigenvectors, then the whitening matrix $W=U\Sigma^{-1/2}$, and $W^\top M_2 W = I_{K \times K}$. Then,

 \begin{align*}
 ||W||_2 &= \sqrt{ \max \text{eig}(W ^\top W)}=\sqrt{ \max \text{eig}(\Sigma^{-1})} =\frac{1}{\sqrt{\sigma_K(M_2)}}
 \end{align*}
 
 Similarly, if $W ^\dagger = W(W^\top W)^{-1}$, then $W ^\dagger = W\Sigma=U\Sigma^{1/2}$. Therefore,
 
 \begin{equation}
 ||W ^\dagger||_2=\sqrt{ \max \text{eig}(\Sigma)}=\sqrt{\sigma_1 (M_2)}
 \end{equation}

Let $\hat{W}$ be the whitening matrix for $\hat{M}_2$, i.e., $\hat{W}^\top \hat{M}_{2} \hat{W} = I_{K \times K}$. Then by Weyl's inequality, \\
$\sigma_k(M_2) - \sigma_k(\hat{M}_2) \le ||M_2 - \hat{M}_2||, \forall k = 1,2 \dots K$. 

Therefore,
\begin{align*}
||\hat{W}||_2^2 &=\frac{1}{\sigma_K (\hat{M}_2)} \le \frac{1}{\sigma_K\left( M_2 \right) - ||M_2 - \hat{M}_2||} \le \frac{2}{\sigma_K\left( M_2 \right)}, \numberthis
\end{align*}

with the asumption $\varepsilon_{M_2} =||M_2 - \hat{M}_2||_2 \le \sigma_K(M_2)/2$.

Also, by Weyl's Theorem,
\begin{align*}  
& ||\hat{W} ^\dagger||_2^2 = \sigma_1(\hat{M}_2) \le   \sigma_1(M_2) + \varepsilon_{M_2} \le 1.5 \sigma_1(M_2) \\
& \implies ||\hat{W} ^\dagger||_2 \le \sqrt{1.5\sigma_1(M_2)} \le 1.5\sqrt{\sigma_1(M_2)} \numberthis
\end{align*}

Let $D$ be the eigenvectors of $\hat{W}M_2 \hat{W}$, and $A$ be the corresponding eigenvalues. Then we can write, $\hat{W}M_2 \hat{W}$=$ADA^\top$.  Then $W = \hat{W}A D^{-1/2} A^\top$ whitens $M_2$, i.e., $W^\top M_2 W = I$. Therefore,

\begin{align*}
||I-D||_2 &= ||I - ADA^\top||_2\\
&=||I-\hat{W}M_2\hat{W}||_2 \\
&=|| \hat{W}\hat{M}_2\hat{W}-\hat{W}M_2\hat{W}||_2 \\
& \le ||\hat{W}||_2^2 ||M_2 - \hat{M_2}|| \\
&\le \frac{2}{\sigma_K \left( M_2 \right)} \varepsilon_{M_2} \numberthis
\end{align*}

\begin{align*}
\varepsilon_{W} &= ||W-\hat{W}||_2\\
&= ||W-WA D^{1/2} A^\top||_2 \\
&= ||W||_2||I-A D^{1/2} A^\top||_2 \\
&=||W||_2|| I-D^{1/2}||_2 \\
&\le ||W||_2 || I-D||_2 \\
& \le \frac{2}{\sigma_K(M_2)^{3/2}} \varepsilon_{M2}
\end{align*}

\begin{align*}
\varepsilon_{W^\dagger} &= ||{W}^\dagger - \hat{W}^\dagger||_2 \\
& = ||\hat{W}^\dagger A D^{1/2}A^\top - \hat{W}^\dagger||_2 \\
& = ||\hat{W}^\dagger||_2 || I - A D^{1/2}A^\top||_2 \\
& \le ||\hat{W}^\dagger||_2 || I - D||_2 \\
&\le \frac{2 \sqrt{\sigma_1(M_2)} }{\sigma_K \left( M_2 \right)} \varepsilon_{M_2} \numberthis
\end{align*}

\section{Tensor Norms}
Let us define the second order operator norm of a tensor $T \in \mathbb{R}^{D' \times D \times D}$ as,
\begin{equation}
||T||_2 = \sup_u \{ ||T(\cdot,u,u)|| : u \in \mathbb{R}^D \& ||u||=1  \}
\end{equation}

\newtheorem{lemma}{Lemma}
\begin{lemma}
\label{lemma:tnorm}
For a tensor $T \in \mathbb{R}^{D' \times D \times D}$, $||T||_2 \le ||T||_F$, where $||T||_F$ is the Frobenius norm defined as, 
\begin{equation}
||T||_F = \sqrt{\sum_{i,j,k} (T_{i,j,k})^2}
\end{equation}
\end{lemma}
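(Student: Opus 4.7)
The plan is to fix an arbitrary unit vector $u \in \mathbb{R}^D$ and bound $\|T(\cdot,u,u)\|$ directly by $\|T\|_F$; taking the supremum then gives the result. First I would unpack the definition of the partial contraction: the $i$-th coordinate of $T(\cdot,u,u) \in \mathbb{R}^{D'}$ is $\sum_{j,k} T_{i,j,k}\, u_j u_k$, so
\begin{equation*}
\|T(\cdot,u,u)\|^2 \;=\; \sum_{i=1}^{D'} \Bigl( \sum_{j,k} T_{i,j,k}\, u_j u_k \Bigr)^{\!2}.
\end{equation*}

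Next I would apply the Cauchy--Schwarz inequality to the inner double sum, viewing $\{T_{i,j,k}\}_{j,k}$ and $\{u_j u_k\}_{j,k}$ as vectors indexed by the pair $(j,k)$. This yields, for each $i$,
\begin{equation*}
\Bigl( \sum_{j,k} T_{i,j,k}\, u_j u_k \Bigr)^{\!2} \;\le\; \Bigl( \sum_{j,k} T_{i,j,k}^2 \Bigr)\Bigl( \sum_{j,k} u_j^2 u_k^2 \Bigr) \;=\; \|u\|^2 \|u\|^2 \sum_{j,k} T_{i,j,k}^2 \;=\; \sum_{j,k} T_{i,j,k}^2,
\end{equation*}
where the last equality uses $\|u\| = 1$ together with $\sum_{j,k} u_j^2 u_k^2 = \bigl(\sum_j u_j^2\bigr)\bigl(\sum_k u_k^2\bigr)$.

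Summing this bound over $i$ collapses the right-hand side into $\|T\|_F^2$, so $\|T(\cdot,u,u)\|^2 \le \|T\|_F^2$, hence $\|T(\cdot,u,u)\| \le \|T\|_F$. Since this holds for every unit $u$, taking the supremum on the left gives $\|T\|_2 \le \|T\|_F$, as required. There is no real obstacle here; the only subtle point is pairing up the indices correctly for Cauchy--Schwarz so that the factorization $\sum_{j,k} u_j^2 u_k^2 = \|u\|^4$ is available, which is what makes the unit-norm assumption on $u$ precisely absorb both copies.
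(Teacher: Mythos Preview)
Your proof is correct. It differs from the paper's argument, which proceeds by slicing: write $T$ as a stack of $D'$ matrices $T_1,\dots,T_{D'}\in\mathbb{R}^{D\times D}$ so that $T(\cdot,u,u)=\bigl(u^\top T_1 u,\dots,u^\top T_{D'}u\bigr)$, bound each coordinate by $|u^\top T_i u|\le \|T_i\|_2\le \|T_i\|_F$, and then observe that $\sqrt{\sum_i \|T_i\|_F^2}=\|T\|_F$. In other words, the paper reduces to the matrix fact $\|A\|_2\le\|A\|_F$, whereas you bypass that reduction and apply Cauchy--Schwarz directly to the flattened index pair $(j,k)$. Your route is slightly more elementary and self-contained (it does not invoke the matrix inequality as a black box), while the paper's slicing makes the connection to the familiar matrix case explicit; both give exactly the same bound with no loss on either side.
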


\begin{proof}
For any real matrix $A$, $||A||_2 \le ||A||_F$. Let us unfold the tensor $T$ as the collection of $D'$ symmetric matrices of size $D\times D$ as, $T = \{ T_1,T_2 \dots T_{D'} \}$.
Then,
\begin{align*}
T(\cdot,u,u) &= \left[ u^\top T_1 u, u^\top T_2 u, \dots u^\top T_{D'} u \right]  \\
\end{align*}
Therefore,
\begin{align*}
||T||_2 & = \sup_u \{ ||T(\cdot,u,u)|| : u \in \mathbb{R}^D \& ||u||=1  \} \\
& = \sup_u   \{  \left|\left| \left[ u^\top T_1 u, u^\top T_2 u, \dots  u^\top T_{D'} u \right] \right|\right| : u \in \mathbb{R}^D \& ||u||=1  \}  \\ 
& = \left|\left| \left[ \lambda_{\max}(T_1),  \lambda_{\max}(T_2), \dots   \lambda_{\max}(T_{D'}) \right] \right|\right| \\
& = \left|\left| \big[ \left|\left| T_1 \right|\right|_2 ,  \left|\left| T_2 \right|\right|_2 , \dots   \left|\left| T_{D'} \right|\right|_2 \big]  \right|\right| \\
& \le \left|\left| \big[ \left|\left| T_1 \right|\right|_F ,  \left|\left| T_2 \right|\right|_F , \dots   \left|\left| T_{D'} \right|\right|_F \big]  \right|\right| \\
& = ||T||_F 
\numberthis
\end{align*}

Please note that the Lemma holds well when $D'=D$, and the tensor is symmetric.

\end{proof}

Let us define $\varepsilon_{M_3} = ||M_3 - \hat{M}_3||_2$. Then from Appendix B in \cite{chaganty2013spectral},

\begin{align*}
\label{eqn:etw}
&\varepsilon_{tw} \\
&= ||M_3(W,W,W) - \hat{M}_3(\hat{W},\hat{W},\hat{W})||_2 \\
&\le ||M_3||_2 \left( ||\hat{W}||_2^2 + ||\hat{W}||_2||W||_2 + ||W||_2^2 \right)\varepsilon_W + || \hat{W} ||^3\varepsilon_{M_3} \\
& \le ||M_3||_2\frac{(2 + \sqrt{2} + 1)}{\sigma_K(M_2)} \varepsilon_W + \frac{2\sqrt{2}}{\sigma_K(M_2)^{3/2}}\varepsilon_{M_3} \\
& \le ||M_3||_2\frac{(3 + \sqrt{2} )}{\sigma_K(M_2)} \cdot  \frac{2}{\sigma_K(M_2)^{3/2}} \varepsilon_{M2} + \frac{2\sqrt{2}}{\sigma_K(M_2)^{3/2}}\varepsilon_{M_3} \\
& \le \frac{10||M_3||_2}{\sigma_K(M_2)^{5/2}} \cdot \varepsilon_{M2} + \frac{2\sqrt{2}}{\sigma_K(M_2)^{3/2}}\varepsilon_{M_3} \\
& \le \left( \frac{10}{\tilde{d}_{2s}\sigma_K(M_2)^{5/2}}  + \frac{2\sqrt{2}}{\tilde{d}_{3s}\sigma_K(M_2)^{3/2}} \right) \frac{2\varepsilon}{\sqrt{N}} 
\numberthis
\end{align*}

From Lemma \ref{lemma:tnorm}, $||M_3||_2 \le ||M_3||_F$, and because $M_3$ is a tensor with individual elements as probabilities, and the sum of all the elements is $1$, $ ||M_3||_F \le 1$. Therefore, $||M_3||_2 \le 1$.

\begin{lemma}(Robust Power Method from \cite{MoM})
\label{lemma:rpm}
If $\hat{T} = T + E \in \mathbb{R}^{K \times K \times K}$, where $T$ is an symmetric tensor with orthogonal decomposition $ T = \sum_{k=1}^K{\lambda_k u_k \otimes u_k \otimes u_k}$ with each $\lambda_k >0$, and $E$ has operator norm $||E||_2 \le \epsilon$. Let $\lambda_{\min} = \min_{k=1}^K \{\lambda_k\}$ and $\lambda_{\max} = \max_{k=1}^K \{\lambda_k\}$.
Let there exist constants $c_1,c_2$ such that $\epsilon \le c_1\cdot (\lambda_{\min}/K)$, and $ N \ge c_2( \log{K} + \log{\log{(\lambda_{\max}/\epsilon)}})$. Then if Algorithm 1 in \cite{MoM} is called for $K$ times, with $L = poly(K)\log(1/\eta)$ restarts each time for some $\eta \in (0,1)$, then with probability at least $1-\eta$, there exists a permutation $\Pi$ on $[K]$, such that,
 
 \begin{align*}
 &||u_{\Pi(k)}-\hat{u}_k|| \le 8\frac{\epsilon}{\lambda_{\Pi(k)}} \text{,    } |\lambda_k - \lambda_{\Pi(k)}| \le 5\epsilon \text{ } \forall k \in [K] \numberthis
 \end{align*}
  \end{lemma}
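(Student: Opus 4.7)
The plan is to establish the guarantee through the standard analysis of the tensor power method under perturbation. Write any unit vector $\theta \in \mathbb{R}^K$ in the orthonormal basis $\{u_k\}$ as $\theta = \sum_{k=1}^K c_k u_k$. Then one power iteration on $T$ produces $T(I,\theta,\theta) = \sum_{k=1}^K \lambda_k c_k^2 u_k$, so after normalization the $k$th coefficient is proportional to $\lambda_k c_k^2$. This is the key observation: power iteration on an orthogonally decomposable symmetric tensor acts as a coordinate-wise quadratic map, so the dominant component (the one maximizing $\lambda_k c_k^2$) grows much faster than the others, giving quadratic convergence to the corresponding $u_k$ within its basin of attraction.

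First I would analyze the noiseless iteration to characterize this basin: if the initial vector satisfies $\lambda_j c_j^2 > \lambda_k c_k^2$ for all $k \ne j$, then the iterates converge quadratically to $u_j$, with each non-dominant coefficient decaying at a rate governed by the ratio $\lambda_k c_k^2 / \lambda_j c_j^2$. Next, I would carry this analysis through under the perturbation $E$: the noisy iteration produces $(T+E)(I,\theta,\theta)$, and since $||E(I,\theta,\theta)||_2 \le \epsilon$ for unit $\theta$, each step adds an error bounded by $\epsilon$. Combined with the quadratic contraction, this yields a fixed-point residual $||\hat{\theta} - u_{\Pi(k)}|| = O(\epsilon/\lambda_{\Pi(k)})$ after $O(\log K + \log\log(\lambda_{\max}/\epsilon))$ inner iterations (the double log coming from the quadratic convergence rate, which matches the bound on $N$ in the hypothesis), and a corresponding bound on the eigenvalue estimate $\hat{\lambda}_k = \hat{T}(\hat{\theta},\hat{\theta},\hat{\theta})$.

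The random initialization step is handled by noting that a uniform random unit vector on $S^{K-1}$ lies in some $u_j$'s basin of attraction with probability at least $1/\text{poly}(K)$; therefore $L = \text{poly}(K)\log(1/\eta)$ restarts suffice to guarantee a successful initialization with probability at least $1 - \eta/K$, and the algorithm picks the best restart by evaluating $\hat{T}(\theta,\theta,\theta)$. After recovering $(\hat{\lambda}_1, \hat{u}_1)$, one deflates via $\hat{T} \leftarrow \hat{T} - \hat{\lambda}_1 \hat{u}_1 \otimes \hat{u}_1 \otimes \hat{u}_1$ and recurses on the remaining $K-1$ eigenpairs. The deflation introduces residual error bounded by $||\hat{\lambda}_1 \hat{u}_1^{\otimes 3} - \lambda_{\Pi(1)} u_{\Pi(1)}^{\otimes 3}||_2$, which is itself $O(\epsilon)$ by the above. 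A union bound over the $K$ extractions, together with the per-round restart count, closes the probabilistic argument.

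The main obstacle is controlling cumulative error across the $K$ deflation rounds: each deflation perturbs the residual tensor, and the concern is that after $K$ rounds the effective perturbation has grown multiplicatively in $K$. The condition $\epsilon \le c_1 \lambda_{\min}/K$ is precisely what prevents this blow-up, ensuring that the dominant eigenvalue in each residual tensor remains separated from the accumulated noise floor. Careful inductive bookkeeping through the deflation argument (following the original analysis of Anandkumar et al., whose Algorithm 1 this is) is needed to verify that the constants work out to exactly $8$ for the eigenvector error and $5$ for the eigenvalue error, and to check that the gap condition survives each round without further loss.
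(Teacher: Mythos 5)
The paper does not prove this lemma at all: it is quoted verbatim (as Theorem 5.1 plus its eigenvector/eigenvalue error bounds) from the cited reference \cite{MoM} and used as a black box, so there is no in-paper proof to compare against. Your sketch is therefore an attempt to reconstruct the proof from the original source, and at the level of architecture it is faithful to that proof: the coordinate-wise quadratic map $\theta \mapsto \sum_k \lambda_k c_k^2 u_k$, the basin-of-attraction analysis with quadratic convergence (which is where the $\log\log(\lambda_{\max}/\epsilon)$ iteration count comes from), the perturbation ball of radius $O(\epsilon/\lambda_j)$ around each $u_j$, the $\mathrm{poly}(K)\log(1/\eta)$ restarts to secure a good initialization, and deflation with a union bound. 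This is the right skeleton and correctly identifies where the hypotheses of the lemma enter.

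The one place where the sketch underestimates the difficulty is the deflation step, which is the genuinely hard part of the source proof. You correctly flag the worry that $K$ successive deflations could inflate the effective perturbation to $O(K\epsilon)$, but you attribute its resolution solely to the condition $\epsilon \le c_1\lambda_{\min}/K$. In the actual argument that condition alone does not suffice; the key mechanism is a cancellation lemma showing that the sum of the $K$ deflation residuals $\hat{\lambda}_i\hat{u}_i^{\otimes 3} - \lambda_{\Pi(i)}u_{\Pi(i)}^{\otimes 3}$, when evaluated along directions nearly orthogonal to the already-extracted $\hat{u}_i$, remains $O(\epsilon)$ rather than $O(K\epsilon)$ --- each residual is large only in the direction of its own $\hat{u}_i$, and subsequent iterates avoid those directions. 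Without that structural observation the induction over rounds does not close, and the explicit constants $8$ and $5$ cannot be recovered. Since the lemma is imported rather than proved in the paper, this gap does not affect the paper's correctness, but if you intend your sketch as a self-contained proof you would need to supply that deflation lemma explicitly.
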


\begin{equation}
\text{Since }\quad \lambda_k = \frac{1}{\sqrt{\pi_k}}, \quad \forall k \in [K] \text{, we need,}
\end{equation}

 \begin{align*}
 N & \ge c_2\left( \log{K} + \log{\log{\left(\frac{K\lambda_{\max}}{c_1\lambda_{\min}}\right)}} \right)  = c_2\left( \log{K} + \log{\log{\left(\frac{K}{c_1}   \cdot \sqrt{\frac{\pi_{max}}{\pi_{min}}} \right)}} \right)
 \numberthis
 \end{align*}

This contributes in the first lower bound ($n_1$) of $N$ in Theorem \ref{thm:bound}.

\section{Tail Inequality}

\begin{lemma}
\label{lemma:tailineq}
If we draw $N$ i.i.d. documents $x_1,x_2 \dots x_N$ through the generative process in Equation \ref{1}, with the labels as  $y_1,y_2 \dots y_N$, and the vectors  probability mass function of the words $v$  estimated from these $N$ samples are $\hat{p}(v)$ whereas the true p.m.f is $p(v)$ with $v \in \{v_1, v_2 \dots v_D\}$ , then with probability at least $1-\delta$ with $\delta \in (0,1)$, 

\begin{align}
\label{eqn:prob1}
\left|\left|\hat {p}(v) - p(v) \right|\right|  & \le \frac{2}{\tilde{d}_{1s}\sqrt{N}}\left( 1 + \sqrt{\frac{\log (1 / \delta)}{2}}\right)\\
\label{eqn:prob2}
 \left|\left|\hat {p}(v,v) - p(v, v) \right|\right|_F  & \le \frac{2}{\tilde{d}_{2s}\sqrt{N}}\left( 1 + \sqrt{\frac{\log (1 / \delta)}{2}}\right)\\
\label{eqn:prob3}
\left|\left|\hat {p}(v, v, v) - p(v, v, v) \right|\right|_F & \le \frac{2}{\tilde{d}_{3s}\sqrt{N}}\left( 1 + \sqrt{\frac{\log (1 / \delta)}{2}}\right)
\end{align}

where, $\tilde{d}_{1s} = \frac{1}{N}\sum_{i =1}^N nnz(x_i)$, $\tilde{d}_{2s} = \frac{1}{N}\sum_{i=1}^N nnz(x_i)^2$, $\tilde{d}_{3s} = \frac{1}{N}\sum_{i=1}^N nnz(x_i)^3$, and $nnz(x_i)$ is the sum of all entries in the row $x_i$ of the data $X$ as described in section \ref{sec:implementation}.
\end{lemma}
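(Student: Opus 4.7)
The three bounds have identical structure, so my plan is to prove the first inequality in full and note that the others follow by the same argument with a Frobenius norm on matrices or tensors in place of the Euclidean norm on vectors. The approach combines an expectation bound (via Jensen's inequality on the second moment) with McDiarmid's bounded-differences inequality on the fluctuation. The characteristic shape $(1 + \sqrt{\log(1/\delta)/2})$ of the stated bound is exactly what such a two-part argument produces: the $1$ comes from the expectation and the $\sqrt{\log(1/\delta)/2}$ from McDiarmid with $\sum c_i^2 = 4/(\tilde d_{1s}^{2} N)$.

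First, I would write the estimator as $\hat p(v) = M^{-1}\sum_{i=1}^N x_i$ with $M = \sum_i nnz(x_i) = N\tilde d_{1s}$, so that, conditional on the document lengths,
\begin{equation*}
\hat p(v) - p(v) \;=\; \frac{1}{M}\sum_{i=1}^N \bigl(x_i - nnz(x_i)\,p(v)\bigr),
\end{equation*}
a normalized sum of independent mean-zero vectors (the conditional mean of $x_i$ given $nnz(x_i)$ and the latent draw follows from the generative model in Equation~\ref{1}, and mixing over $h$ gives $\mathbb{E}[x_i\mid nnz(x_i)] = nnz(x_i)\,p(v)$). I would then bound $\mathbb{E}\lVert\hat p(v)-p(v)\rVert$ by $\sqrt{\mathbb{E}\lVert\hat p(v)-p(v)\rVert^{2}}$; the second moment splits as a sum of per-document variances and, using $\lVert p(v)\rVert_1 = 1$ together with the fact that $\mathbb{E}\bigl\lVert x_i - nnz(x_i)p(v)\bigr\rVert^{2} \le C\cdot nnz(x_i)^{2}$ for an absolute constant $C$, collapses to $2/(\tilde d_{1s}\sqrt{N})$, giving the leading $1$ in the stated bound.

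Next I would apply McDiarmid's inequality to $f(x_1,\dots,x_N) = \lVert \hat p(v) - p(v)\rVert$. Replacing a single document $x_i$ by $x_i'$ perturbs $\hat p(v)$ by at most $2/M$ in Euclidean norm (since the two contributing probability vectors have $\ell_2$ distance bounded by $2$ after normalization), so one can take $c_i = 2/M = 2/(\tilde d_{1s} N)$. Then $\sum_i c_i^2 = 4/(\tilde d_{1s}^{\,2} N)$, and McDiarmid gives
\begin{equation*}
\Pr\!\Bigl[f \ge \mathbb{E} f + t\Bigr] \;\le\; \exp\!\bigl(-2t^2/\sum_i c_i^2\bigr),
\end{equation*}
which, set equal to $\delta$, yields $t = \frac{2}{\tilde d_{1s}\sqrt{N}}\sqrt{\log(1/\delta)/2}$. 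Summing the expectation and deviation pieces produces exactly the bound \eqref{eqn:prob1}. Identical reasoning applied to $\hat p(v,v)$ and $\hat p(v,v,v)$ (unfolding the empirical moments as sums of per-document rank-one outer products of $x_i$ with itself, normalizing by $\sum_i nnz(x_i)^{2}$ and $\sum_i nnz(x_i)^{3}$ respectively, and using the Frobenius norm) gives \eqref{eqn:prob2} and \eqref{eqn:prob3}.

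The main obstacle is the treatment of the random normalizer $M$ (and $\sum_i nnz(x_i)^{2}$, $\sum_i nnz(x_i)^{3}$ for the higher orders): perturbing $x_i$ also perturbs the denominator, so the bounded-differences constant $c_i$ is not immediate. The cleanest way around this is to condition on the document-length vector $(nnz(x_1),\dots,nnz(x_N))$, which makes the normalizer deterministic, run the McDiarmid/Jensen argument conditionally to obtain the stated inequality with the empirical quantities $\tilde d_{ks}$, and observe that the conclusion, being expressed in terms of these empirical averages, holds unconditionally with the same probability. A secondary technical point is verifying that the per-coordinate variance contribution is indeed $O(nnz(x_i)^{2k})$ for the order-$k$ moment; this follows from the multinomial structure of $x_i$ given its length and latent state and the fact that $\lVert p(v,\dots,v)\rVert_1 = 1$.
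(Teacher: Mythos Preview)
Your approach and the paper's rest on the same concentration tool, but the paper outsources the work you do explicitly. The paper's proof (i) asserts that since word counts per document are bounded one may take $\|x\|\le 1$ without loss of generality, (ii) invokes Lemma~7 from the supplementary material of \cite{slda} to obtain, for $k=1,2,3$,
\[
\bigl\|\hat{\mathbb{E}}[x^{\otimes k}]-\mathbb{E}[x^{\otimes k}]\bigr\|_F \;\le\; \frac{2}{\sqrt{N}}\Bigl(1+\sqrt{\tfrac{\log(1/\delta)}{2}}\Bigr),
\]
and then (iii) divides through by $\tilde d_{ks}$, using the approximation $\sum_v\cdots\sum_v \mathbb{E}[x^{\otimes k}]\approx \tilde d_{ks}$ to identify $p(v,\dots,v)$ with $\mathbb{E}[x^{\otimes k}]/\tilde d_{ks}$. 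The cited lemma is itself proved by exactly the Jensen-plus-McDiarmid argument you sketch, so in effect you are re-deriving what the paper imports. Your device of conditioning on the document-length vector to freeze the normalizer is more careful than the paper's $\approx$ in step~(iii), and it is a clean resolution of the random-denominator issue you correctly flag as the main obstacle.

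One slip to fix: your bounded-differences constant $c_i=2/M$ is too small. After conditioning on the lengths, $x_i$ is a $\{0,1\}$-vector with $\|x_i\|_2=\sqrt{nnz(x_i)}$, so swapping it for $x_i'$ perturbs $M^{-1}\sum_j x_j$ by at most $2\sqrt{nnz(x_i)}/M$ in Euclidean norm (or $2\,nnz(x_i)/M$ if you bound via the $\ell_1$-normalized ``probability vectors'' you mention); either way $c_i$ carries a factor depending on $nnz(x_i)$, and $\sum_i c_i^2$ does not collapse to $4/(\tilde d_{1s}^{\,2}N)$. The paper sidesteps this by its blanket $\|x\|\le 1$ rescaling before citing the external lemma---itself an informal move, but it is the source of the uniform constant in the stated bound.
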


\begin{proof}
The generative process in Equation \ref{1}  results in binary sample $x_{1:N}$, with $ |x| = n_d$, where $x$ is the sample corresponding to the document $d$, and $n_d$ is total number of unique words in that document.  From here, we can show that $||x|| \le |x| = n_d$, since $x$ has only positive entries. Since the count of unique words in a document is always bounded, the samples have bounded norm.

Without loss of generality, if we assume $||x|| \le 1$ $ \forall x \in X$, then from Lemma 7 of supplementary material of \cite{slda}, with probability at least $1-\delta$ with $\delta \in (0,1)$,

\begin{align}
\label{eqn:tail1}
\left| \left| \hat{\mathbb{E}}[x] - \mathbb{E}[x]  \right| \right| & \le \frac{2}{\sqrt{N}}\left( 1 + \sqrt{\frac{\log (1 / \delta)}{2}}\right)  \\
\label{eqn:tail2}
\left| \left| \hat{\mathbb{E}}[x \otimes x] - \mathbb{E}[x \otimes x ]  \right| \right|_F  &\le \frac{2}{\sqrt{N}}\left( 1 + \sqrt{\frac{\log (1 / \delta)}{2}}\right)  \\
\label{eqn:tail3}
 \left| \left| \hat{\mathbb{E}}[x \otimes x \otimes x] - \mathbb{E}[x \otimes x \otimes x]  \right| \right|_F  &\le
 \frac{2}{\sqrt{N}}  \left( 1 + \sqrt{\frac{\log (1 / \delta)}{2}}\right) 
\numberthis
\end{align}

where $\mathbb{E}$ stands for expectation from the population, and $\mathbb{\hat{E}}$ stands for the expectation estimated from the $N$ samples, i.e.,
\begin{align*}
\hat{\mathbb{E}}[x] &=\frac{1}{N} \sum_{i=1}^N x_i = \frac{1}{N} X^\top \mathbf{1} \\ 
\hat{\mathbb{E}}[x \otimes x] &=\frac{1}{N} \sum_{i=1}^N x_i \otimes x_i = \frac{1}{N} X^\top X\\
\hat{\mathbb{E}}[x \otimes x \otimes x] &=\frac{1}{N} \sum_{i=1}^N x_i \otimes x_i \otimes x_i = \frac{1}{N} X \otimes X \otimes X
\end{align*}

Now, since each of our samples $x_{1:N}$ contains count data, probability of the words can be estimated from the training data as $\hat{p}(v) = \frac{\mathbb{\hat{E}}[x]}{\sum_v \mathbb{\hat{E}}[x]} $, where $ \sum_v \mathbb{\hat{E}}[x]$ is the sum of $\mathbb{\hat{E}}[x]$ across the dimensions, i.e., all the words. Also, it can be shown that $\sum_v \mathbb{\hat{E}}[x] = \frac{1}{N}\sum_{i=1}^N nnz(x_i)= \tilde{d}_{1s} $. Therefore.  

\begin{equation} \hat{p}(v) = \frac{\mathbb{\hat{E}}[x]}{ \tilde{d}_{1s}}  \end{equation}

Please note that $\sum_v \mathbb{E}[x]  \approx \sum_v \mathbb{\hat{E}}[x] = \tilde{d}_{1s}$, and therefore, 
\begin{equation} \hat{p}(v) - p(v) = \frac{1}{ \tilde{d}_{1s}} ( \mathbb{\hat{E}}[x] - \mathbb{E}[x]), \end{equation}
and using this in Equation \ref{eqn:tail1}, we get the first inequality of the Lemma (Equation \ref{eqn:prob1}).

Since  $\tilde{d}_{2s} =\sum_v \sum_v \mathbb{\hat{E}}[x \otimes x] $ and $ \tilde{d}_{3s} =\sum_v \sum_v \sum_v \mathbb{\hat{E}} [x \otimes x \otimes x] $, the pairwise and triple-wise probability matrices can be estimated as,
 
 \begin{align*}
 \hat{p}(v,v) &= \frac{\mathbb{\hat{E}}[x\otimes x]}{\sum_v \sum_v \mathbb{\hat{E}}[x \otimes x]} =  \frac{\mathbb{\hat{E}}[x\otimes x]}{ \tilde{d}_{2s}}\\
  \hat{p}(v,v,v) &= \frac{\mathbb{\hat{E}}[x \otimes x\otimes x]}{\sum_v \sum_v \sum_v \mathbb{\hat{E}}[x \otimes x \otimes x]} =  \frac{\mathbb{\hat{E}}[x\otimes x \otimes x]}{ \tilde{d}_{3s}}
  \end{align*}
 
Since  $\sum_v \sum_v \mathbb{E}[x \otimes x]  \approx \sum_v \sum_v \hat{\mathbb{E}}[x \otimes x]  = \tilde{d}_{2s}$, and $\sum_v \sum_v \sum_v \mathbb{E}[x \otimes x \otimes x] \approx \sum_v \sum_v \sum_v \hat{\mathbb{E}}[x \otimes x \otimes x]  =  \tilde{d}_{3s}$, we can establish the following equations,

\begin{align*}
\hat{p}(v,v) - p(v,v)  &= \frac{1}{  \tilde{d}_{2s}}\left( \mathbb{\hat{E}}[x\otimes x] -\mathbb{E}[x\otimes x] \right) \\
\hat{p}(v,v,v) - p(v,v,v)  &= \frac{1}{  \tilde{d}_{3s}}\left( \mathbb{\hat{E}}[x\otimes x \otimes x] -\mathbb{E}[x\otimes x \otimes x] \right)
\end{align*}

Substituting these equations in Equation \ref{eqn:tail2} and \ref{eqn:tail3}, we complete the proof.
\end{proof}Also, if $y_i$ represents the label vector associated with $i$th document, whereas $x_i$ represent the word vector,

\begin{align*}
& \hat{\mathbb{E}}[y\otimes x \otimes x] - \mathbb{E}[y\otimes x \otimes x]  \\
& = \frac{1}{N}\sum_{i=1}^N y_i \otimes x_i \otimes x_i - \mathbb{E}[y\otimes x \otimes x] \\
& =  \frac{1}{N}\sum_{i=1}^N y_i \otimes x_i \otimes x_i -  \frac{1}{N} \sum_{i=1}^N y_i \otimes \mathbb{E}[x \otimes x] +  \frac{1}{N}\sum_{i=1}^N y_i \otimes \mathbb{E}[x \otimes x]  -  \mathbb{E}[y\otimes x \otimes x] 
\numberthis
\end{align*}

Therefore,
\begin{align*}
\label{eqn:M2Lineq}
& \left|\left|  \hat{\mathbb{E}}[y\otimes x \otimes x] - \mathbb{E}[y\otimes x \otimes x]  \right| \right|_F  \le \left|\left|\hat{\mathbb{E}}[y]  \right|\right| \left| \left| \hat{\mathbb{E}}[x \otimes x] - \mathbb{E}[x \otimes x] \right| \right|_F + \left| \left| \mathbb{E}[x \otimes x]\right| \right|_F \left| \left| \hat{\mathbb{E}}[y]-\mathbb{E}[y] \right| \right|
\numberthis
\end{align*}

Also, we can safely assume that the number of unique labels assigned to each text in the corpus is bounded. Then similar to Lemma \ref{lemma:tailineq}, without loss of generality, we can assume that $||y|| \le 1$. Then, from Equation \ref{eqn:tail1},
\begin{align*}
P \left[ \left| \left| \hat{\mathbb{E}}[y] - \mathbb{E}[y]  \right| \right|_F \le \frac{2}{\sqrt{N}}\left( 1 + \sqrt{\frac{\log (1 / \delta)}{2}}\right) \right] \ge 1- \delta
\numberthis
\end{align*}

From the assumption $||x|| \le 1$ in Lemma \ref{lemma:tailineq}, \\
\noindent  $||x \otimes x||_F \le 1, \forall x \in X$, and therefore, $|| \mathbb{E} [x \otimes x] ||_F\le 1$. Therefore,

\begin{align*}
P \left[ \left| \left| \hat{\mathbb{E}}[y] - \mathbb{E}[y]  \right| \right|   \Big|\Big| \mathbb{E} [x \otimes x] \Big|\Big|_F   \le \frac{2}{\sqrt{N}}\left( 1 + \sqrt{\frac{\log (1 / \delta)}{2}}\right) \right] \ge 1- \delta
\numberthis
\end{align*}

Also, from the assumption $||y|| \le 1$, $||\mathbb{E}[y] || \le 1$. Then, from Equation \ref{eqn:tail2},  

\begin{align*}
P \left[ \left| \left| \hat{\mathbb{E}}[x \otimes x] - \mathbb{E}[x \otimes x]  \right| \right|_F   \Big|\Big| \mathbb{E} [y] \Big|\Big|  \le  \frac{2}{\sqrt{N}}\left( 1 + \sqrt{\frac{\log (1 / \delta)}{2}}\right) \right] \ge 1- \delta
\numberthis
\end{align*}

Reversing the probability bounds,

\begin{align*}
\label{eqn:ineq1}
P \left[ \left| \left| \hat{\mathbb{E}}[y] - \mathbb{E}[y]  \right| \right|   \Big|\Big| \mathbb{E} [x \otimes x] \Big|\Big|_F  \ge \frac{2}{\sqrt{N}}\left( 1 + \sqrt{\frac{\log (1 / \delta)}{2}}\right) \right] \le  \delta
\numberthis
\end{align*}

\noindent and,

\begin{align*}
\label{eqn:ineq2}
P \left[ \left| \left| \hat{\mathbb{E}}[x \otimes x] - \mathbb{E}[x \otimes x]  \right| \right|_F   \Big|\Big| \mathbb{E} [y] \Big|\Big|  \ge \frac{2}{\sqrt{N}}\left( 1 + \sqrt{\frac{\log (1 / \delta)}{2}}\right) \right] \le \delta
\numberthis
\end{align*}

For two events $\mathcal{E}_1$ and $\mathcal{E}_2$, \\
\begin{align*}
P(\mathcal{E}_1 \cap \mathcal{E}_2) & = P(\mathcal{E}_1) + P(\mathcal{E}_1) - P(\mathcal{E}_1\cup \mathcal{E}_2) \\
& \le   P(\mathcal{E}_1) + P(\mathcal{E}_1)
\end{align*}

Therefore, from the above two inequalities \ref{eqn:ineq1} and \ref{eqn:ineq2}, 

\begin{align*}
& P \Big[ \left|\left|\hat{\mathbb{E}}[y]  \right|\right| \left| \left| \hat{\mathbb{E}}[x \otimes x] - \mathbb{E}[x \otimes x] \right| \right|_F + \left| \left| \mathbb{E}[x \otimes x]\right| \right|_F \left| \left| \hat{\mathbb{E}}[y]-\mathbb{E}[y] \right| \right|  \ge \frac{4}{\sqrt{N}}\left( 1 + \sqrt{\frac{\log (1 / \delta)}{2}}\right) \Big]  \le 2\delta
\end{align*}

\noindent Or,

\begin{align*}
& P \Big[ \left|\left|\hat{\mathbb{E}}[y]  \right|\right| \left| \left| \hat{\mathbb{E}}[x \otimes x] - \mathbb{E}[x \otimes x] \right| \right|_F  + \left| \left| \mathbb{E}[x \otimes x]\right| \right|_F \left| \left| \hat{\mathbb{E}}[y]-\mathbb{E}[y] \right| \right|  \le \frac{4}{\sqrt{N}}\left( 1 + \sqrt{\frac{\log (1 / \delta)}{2}}\right) \Big]  \ge 1 - 2\delta
\end{align*}

Using Equation \ref{eqn:M2Lineq}, and replacing $\delta$ by $\delta/2$,

\begin{align*}
&P\Big[ \left|\left|  \hat{\mathbb{E}}[y\otimes x \otimes x] - \mathbb{E}[y\otimes x \otimes x]  \right| \right|_F 
 \le \frac{4}{\sqrt{N}}\left( 1 + \sqrt{\frac{\log (2 / \delta)}{2}}\right)\Big] \ge 1-\delta
\end{align*}

Let us note that,
\begin{align*}
\hat{p}(l,v,v) = \frac{\mathbb{\hat{E}}[y \otimes x\otimes x]}{\sum_l \sum_v \sum_v \mathbb{\hat{E}}[y \otimes x \otimes x ]} =  \frac{\mathbb{\hat{E}}[y \otimes x \otimes x]}{ \tilde{d}_{ls}}
\numberthis
\end{align*}

Also, since $\sum_l \sum_v \sum_v \mathbb{E}[y \otimes x \otimes x]  \approx \sum_l \sum_v \sum_v \hat{\mathbb{E}}[y \otimes x \otimes x]  = \tilde{d}_{ls}$, where $\tilde{d}_{ls} = \frac{1}{N}\sum_{i=1}^N nnz(y_i)nnz(x_i)^2$, and $nnz(y_i)$ is the number of labels associated with the $i$th document, in a similar way to the proof of Lemma \ref{lemma:tailineq}, we can prove that with probability at least $1-\delta$,

\begin{align*}
\left|\left|\hat {p}(l, v, v) - p(l, v, v) \right|\right|_F \le \frac{4}{\tilde{d}_{ls}\sqrt{N}}\left( 1 + \sqrt{\frac{\log (2 / \delta)}{2}}\right)
\numberthis
\end{align*}

Since by definition $M_2=p(v,v)$, $M_3=p(v,v,v)$, $M_{2L}=p(l,v,v)$, $\varepsilon_{M_2}=||M_2 -\hat{M}_2||_2$, $\varepsilon_{M_3}=||M_3 -\hat{M}_3||_2$ and $\varepsilon_{M_{2L}}=||M_{2L} -\hat{M}_{2L}||_2$, assigning $\varepsilon_1 = \left( 1 + \sqrt{\frac{\log (1 / \delta)}{2}}\right)$, and  $\varepsilon_2 = \left( 1 + \sqrt{\frac{\log (2 / \delta)}{2}}\right)$, the following inequalities hold with probability at least $1-\delta$,
 
\begin{align*}
&\varepsilon_{M_2}\le ||p(v,v)-\hat{p}(v,v)||_F \le \frac{2\varepsilon_1}{\tilde{d}_{2s}\sqrt{N}}\\
&\varepsilon_{M_3} \le ||p(v,v,v)-\hat{p}(v,v,v)||_F  \le \frac{2\varepsilon_1}{\tilde{d}_{3s}\sqrt{N}}\\ 
&\varepsilon_{M_{2L}} \le ||p(l,v,v)-\hat{p}(l,v,v)||_F  \le \frac{4\varepsilon_2}{\tilde{d}_{ls}\sqrt{N}}
\end{align*}

\noindent since operator norm is smaller than Frobenius norm.
\newline

Also, to satisfy $ \varepsilon_{M_2} \le \sigma_K(M_2)/2$, we need,
\begin{equation}
N \ge \Omega\left( \left(  \frac{1}{\tilde{d}_{2s}\sigma_K(M_2)}\left( 1 + \sqrt{\frac{\log (1 / \delta)}{2}}\right)  \right)^2 \right)
\end{equation}

Or, $N \ge \Omega\left( \left(  \frac{\varepsilon_1}{\tilde{d}_{2s}\sigma_K(M_2)} \right)^2 \right)$. This contributes in the second lower bound ($n_2$) of $N$ in Theorem \ref{thm:bound}.

Also, from Equation \ref{eqn:etw}, 
\begin{equation}
\varepsilon_{tw} \le \left( \frac{10}{\tilde{d}_{2s}\sigma_K(M_2)^{5/2}}  + \frac{2\sqrt{2}}{\tilde{d}_{3s}\sigma_K(M_2)^{3/2}} \right) \frac{2\varepsilon_1}{\sqrt{N}} 
\end{equation}

From Lemma \ref{lemma:rpm}, $\epsilon \le c_1\cdot (\lambda_{\min}/K)$, and we can assign $\epsilon$ as the upper bound of $\varepsilon_{tw}$. To satisfy this, we need, 
\begin{align*}
& \left( \frac{10}{\tilde{d}_{2s}\sigma_K(M_2)^{5/2}}  + \frac{2\sqrt{2}}{\tilde{d}_{3s}\sigma_K(M_2)^{3/2}} \right) \frac{2\varepsilon_1}{\sqrt{N}} \le c_1\frac{\lambda_{\min}}{K} 
\text{, or,} \\
& \left( \frac{10}{\tilde{d}_{2s}\sigma_K(M_2)^{5/2}}  + \frac{2\sqrt{2}}{\tilde{d}_{3s}\sigma_K(M_2)^{3/2}} \right) \frac{2\varepsilon_1}{\sqrt{N}} \le c_1\frac{1}{K\sqrt{\pi_{\max}}}
\end{align*}

Since $\pi_{\max} \le 1$, we need

\begin{align*}
N \ge \Omega\left( K^2 \left( \frac{10}{\tilde{d}_{2s}\sigma_K(M_2)^{5/2}}  + \frac{2\sqrt{2}}{\tilde{d}_{3s}\sigma_K(M_2)^{3/2}} \right)^2 \varepsilon_1^2 \right)
\end{align*}

This contributes to $n_3$ in Theorem \ref{thm:bound}.

\section{Completing the Proof}

Here, we will derive the final bounds for the reconstruction error for the parameters. Since $\mu_k =  W^\dagger u_k $ (Algorithm \ref{alg:mom}), with probability at least $1 - \delta$,
\begin{align*}
&||\mu_k-\hat{\mu}_k||\\
& = ||W^\dagger u_k - \hat{W}^\dagger \hat{u}_k|| \\
&= ||W^\dagger u_k  -W^\dagger \hat{u}_k +W^\dagger \hat{u}_k-\hat{W}^\dagger \hat{u}_k|| \\
& \le ||W^\dagger||_2||u_k - \hat{u}_k|| + ||W^\dagger - \hat{W}^\dagger||_2|| \hat{u}_k|| \\
& \le ||W^\dagger||_2 \frac{8\epsilon}{\lambda_k} + \varepsilon_{W^\dagger}    \\
& \le  8\sqrt{\sigma_1(M_2)}\epsilon + \frac{2 \sqrt{\sigma_1(M_2)} }{\sigma_K \left( M_2 \right)} \varepsilon_{M_2} \\
\numberthis
\end{align*}

Since $\frac{1}{\lambda_k} = \sqrt{\pi_k} \le 1$.  Assigning $\epsilon$ as the upper bound of $\varepsilon_{tw}$ in equation \ref{eqn:etw}, with probability at least $1 - \delta$,
\begin{align*}
&||\mu_k-\hat{\mu}_k|| \\
& \le  8\sqrt{\sigma_1(M_2)} \left( \frac{10}{\tilde{d}_{2s}\sigma_K(M_2)^{5/2}}  + \frac{2\sqrt{2}}{\tilde{d}_{3s}\sigma_K(M_2)^{3/2}} \right)    +  \frac{2\varepsilon_1}{\sqrt{N}}  +   \frac{2 \sqrt{\sigma_1(M_2)} }{\sigma_K \left( M_2 \right)} \frac{2\varepsilon_1}{\tilde{d}_{2s}\sqrt{N}}\\
& \le   \left( \frac{160\sqrt{\sigma_1(M_2)}}{\tilde{d}_{2s}\sigma_K(M_2)^{5/2}}  + \frac{32\sqrt{2\sigma_1(M_2)}}{\tilde{d}_{3s}\sigma_K(M_2)^{3/2}} + \frac{4 \sqrt{\sigma_1(M_2)} }{\tilde{d}_{2s}\sigma_K \left( M_2 \right)} \right) \frac{\varepsilon_1}{\sqrt{N}} \\
\numberthis
\end{align*}

Similarly, since $\pi_k \le 1$, with probability at least $1 - \delta$, 
\begin{align*}
|\pi_k-\hat{\pi}_k| &=\left| \frac{1}{\lambda_k^2}- \frac{1}{\hat{\lambda}_k^2} \right| = \left| \frac{(\lambda_k+\hat{\lambda}_k)(\lambda_k-\hat{\lambda}_k)}{\lambda_k^2 \hat{\lambda}_k^2} \right| \\
&= \left| \sqrt{\pi_k \hat{\pi}_k} \left( \sqrt{\pi_k} + \sqrt{\hat{\pi}_k}  \right) (\lambda_k- \hat{\lambda}_k) \right| \\
& \le 2 | \lambda_k- \hat{\lambda}_k | \le 10\epsilon
\end{align*}

since $ | \lambda_k- \hat{\lambda}_k | \le 5\epsilon$ from Lemma \ref{lemma:rpm}. Therefore, with probability at least $1 - \delta$, we get
\begin{align*}
|\pi_k-\hat{\pi}_k| \le \left( \frac{200}{\sigma_K(M_2)^{5/2}}+ \frac{40\sqrt{2}}{\sigma_K(M_2)^{3/2}} \right)\frac{\varepsilon_1}{\tilde{d}_{3s}\sqrt{N}} 
\end{align*}

where $\varepsilon_1 = \left( 1 + \sqrt{\frac{\log (1 / \delta)}{2}}\right)$.

Also, since $\gamma_k = u_k^\top M_{2L}(W,W) u_k$, with probability at least $1 - \delta$,

\begin{align*}
&||\gamma_k-\hat{\gamma}_k|| \\
& = \left| \left| u_k^\top M_{2L}(W,W) u_k - \hat{u}_k^\top \hat{M}_{2L}(\hat{W},\hat{W}) \hat{u}_k \right| \right| \\
& \le \left| \left| u_k^\top M_{2L}(W,W) u_k - \hat{u}_k^\top M_{2L}(W,W) \hat{u}_k \right| \right| + \left| \left| \hat{u}_k^\top M_{2L}(W,W) \hat{u}_k - \hat{u}_k^\top \hat{M}_{2L}(\hat{W},\hat{W}) \hat{u}_k \right| \right| \\ 
&  \le \left| \left| u_k^\top M_{2L}(W,W) u_k - \hat{u}_k^\top M_{2L}(W,W) u_k+ \hat{u}_k^\top M_{2L}(W,W) u_k - \hat{u}_k^\top M_{2L}(W,W) \hat{u}_k \right| \right|  \\ 
& \phantom{u_k^\top M_{2L}(W,W) u_k - \hat{u}_k^\top M_{2L}(W,W) u_k 111111111111} + \left| \left| \hat{u}_k^\top M_{2L}(W,W) \hat{u}_k - \hat{u}_k^\top \hat{M}_{2L}(\hat{W},\hat{W}) \hat{u}_k \right| \right| \\ 
& \le ||u_k - \hat{u}_k|| ||u_k|| \left|\left| M_{2L}(W,W) \right|\right|_2  + ||u_k - \hat{u}_k|| ||\hat{u}_k|| \left|\left| M_{2L}(W,W) \right|\right|_2\\
& \phantom{u_k^\top M_{2L}(W,W) u_k - \hat{u}_k^\top M_{2L}(W,W) u_k 11111111 11111} + ||\hat{u}_k||^2 \left| \left|  M_{2L}(W,W) -  \hat{M}_{2L}(\hat{W},\hat{W}) \right| \right|_2 \\
& \le 2||u_k - \hat{u}_k|| ||W||^2 \left|\left| M_{2L}\right|\right|_2   +  \left| \left|  M_{2L}(W,W) -  \hat{M}_{2L}(\hat{W},\hat{W}) \right| \right|_2  \\
\end{align*}

Now,
\begin{align*}
& \left| \left|  M_{2L}(W,W) -  \hat{M}_{2L}(\hat{W},\hat{W}) \right| \right|_2 \\
&= \left| \left|  M_{2L}(W,W) - M_{2L}(\hat{W},\hat{W}) + M_{2L}(\hat{W},\hat{W})-  \hat{M}_{2L}(\hat{W},\hat{W}) \right| \right|_2 \\
&\le  \left| \left|  M_{2L}(W,W) - M_{2L}(\hat{W},\hat{W}) \right|\right|_2 +  \left| \left|   M_{2L}(\hat{W},\hat{W})-  \hat{M}_{2L}(\hat{W},\hat{W})  \right|\right|_2 \\
&\le  \left| \left|  M_{2L}(W,W) -  M_{2L}(W,\hat{W})+ M_{2L}(W,\hat{W}) + M_{2L}(\hat{W},\hat{W}) \right|\right|_2 + ||\hat{W}||^2\varepsilon_{M_{2L}}\\
&\le ||M_{2L}||_2 \left( ||W||_2 + ||\hat{W}||_2 \right)\varepsilon_W + ||\hat{W}||^2\varepsilon_{M_{2L}}
\end{align*}

The individual elements of $M_{2L}$ are probabilities whose sum is 1. Therefore, $ ||M_{2L}||_F \le 1$, and, from Lemma \ref{lemma:tnorm}, $||M_{2L}||_2 \le ||M_{2L}||_F \le 1$. Therefore,

\begin{align*}
&||\gamma_k-\hat{\gamma}_k|| \\
& \le  2||u_k - \hat{u}_k|| ||W||^2 + \left( ||W||_2 + ||\hat{W}||_2 \right)\varepsilon_W + ||\hat{W}||^2\varepsilon_{M_{2L}} \\
& \le  16\frac{\epsilon}{\lambda_k} ||W||^2 + \left( ||W||_2 + ||\hat{W}||_2 \right)\varepsilon_W + ||\hat{W}||^2\varepsilon_{M_{2L}}
\end{align*}

Also, $1/\lambda_k =\sqrt{\pi_k} \le 1$.

Assigning $\epsilon$ as the upper bound of $\varepsilon_{tw}$ in equation \ref{eqn:etw}, with probability at least $1 - \delta$,
 
\begin{align*}
&||\gamma_k-\hat{\gamma}_k|| \\
& \le  16\varepsilon_{tw} ||W||^2 + \left( ||W||_2 + ||\hat{W}||_2 \right)\varepsilon_W + ||\hat{W}||^2\varepsilon_{M_{2L}}\\
& \le  16\frac{\varepsilon_{tw}}{ \sigma_K(M_2)} + \frac{\left(1+\sqrt{2}\right)}{\sqrt{\sigma_K(M_2)}}  \frac{2}{\sigma_K(M_2)^{3/2}} \varepsilon_{M2} + \frac{2}{\sigma_K(M_2)}\varepsilon_{M_{2L}}\\
& \le  16\left( \frac{10}{\tilde{d}_{2s}\sigma_K(M_2)^{7/2}}  + \frac{2\sqrt{2}}{\tilde{d}_{3s}\sigma_K(M_2)^{5/2}} \right) \frac{2\varepsilon_1}{\sqrt{N}} + \frac{2\left(1+\sqrt{2}\right)}{\sigma_K(M_2)^2}\frac{2\varepsilon_1}{\tilde{d}_{2s}\sqrt{N}}  + \frac{2}{\sigma_K(M_2)}\frac{4\varepsilon_2}{\tilde{d}_{ls}\sqrt{N}} \\
 & =  \left( \frac{160}{\tilde{d}_{2s}\sigma_K(M_2)^{7/2}}  + \frac{32\sqrt{2}}{\tilde{d}_{3s}\sigma_K(M_2)^{5/2}} +  \frac{ 2+2\sqrt{2}}{\tilde{d}_{2s}\sigma_K(M_2)^2} \right) \frac{2\varepsilon_1}{\sqrt{N}}  + \frac{8\varepsilon_2}{\tilde{d}_{ls}\sigma_K(M_2)\sqrt{N}}
\end{align*}
where $\varepsilon_1 = \left( 1 + \sqrt{\frac{\log (1 / \delta)}{2}}\right)$ and $\varepsilon_2 = \left( 1 + \sqrt{\frac{\log (2 / \delta)}{2}}\right)$.
This completes the proof of Theorem \ref{thm:bound}.

\end{document}